\documentclass[11pt]{article}

\usepackage{acl}

\usepackage{times}
\usepackage{latexsym}
\usepackage[T1]{fontenc}
\usepackage[utf8]{inputenc}
\usepackage{microtype}
\usepackage{inconsolata}
\usepackage{graphicx}
\usepackage{booktabs}
\usepackage{colortbl}
\usepackage{amsmath}
\usepackage{amssymb}
\usepackage{amsthm}
\usepackage{algorithm}
\usepackage{algorithmic}
\usepackage{array}
\usepackage{placeins}

\newtheorem{proposition}{Proposition}

\title{MileGPO: Milestone Inference with Local Evidence for Graph-Based Policy Optimization of Long-Horizon LLM Agents}

\author{
	Bo Qian, 
	Yuting Wu\thanks{\;Corresponding author.},  
	Shuang Zeng, Huaiyu Wan, Dalin Zhang \and
	Jiqiang Liu \\
	Beijing Jiaotong University, China\\
	\tt \{ytwu1,hywan,dalin,jqliu\}@bjtu.edu.cn \\
    \tt bobo1398861921@gmail.com, zengs@pku.edu.cn \\
}

\begin{document}
\maketitle

\begin{abstract}
Credit assignment is challenging in long-horizon agentic reinforcement
learning, where supervision often comes only from final rewards. Existing
methods refine trajectory-level signals into step-level credits through step
grouping or graph-based advantage estimation, but can overlook meaningful
intermediate milestones. We propose \textbf{MileGPO} (\underline{M}ilestone
\underline{I}nference with \underline{L}ocal \underline{E}vidence for
\underline{G}raph-Based \underline{P}olicy \underline{O}ptimization), which
derives process-level credit from grouped on-policy rollouts through three
designs. \textbf{Milestone Discovery} identifies candidate milestones on
successful rollouts and recurring traps on failed ones.
\textbf{Reliability-Calibrated Shaping (RCS)} weights these candidates by
outcome-based confidence, strengthening reliable milestones and traps while
down-weighting uncertain ones. \textbf{Progress-Contrastive Calibration
(PCC)} further tests whether a candidate reflects local progress and whether
its incoming transition outperforms observed alternatives from the same state.
MileGPO requires neither auxiliary models nor additional environment
interaction. Experiments on ALFWorld and WebShop show state-of-the-art
performance and a small in-distribution to out-of-distribution gap on
ALFWorld. Ablations and credit diagnostics indicate that reliability
weighting, local progress, and same-state branch evidence complement milestone
discovery and resolve ambiguous intermediate credit.
\end{abstract}

\section{Introduction}

\begin{figure}[t]
\centering
\includegraphics[width=\columnwidth]{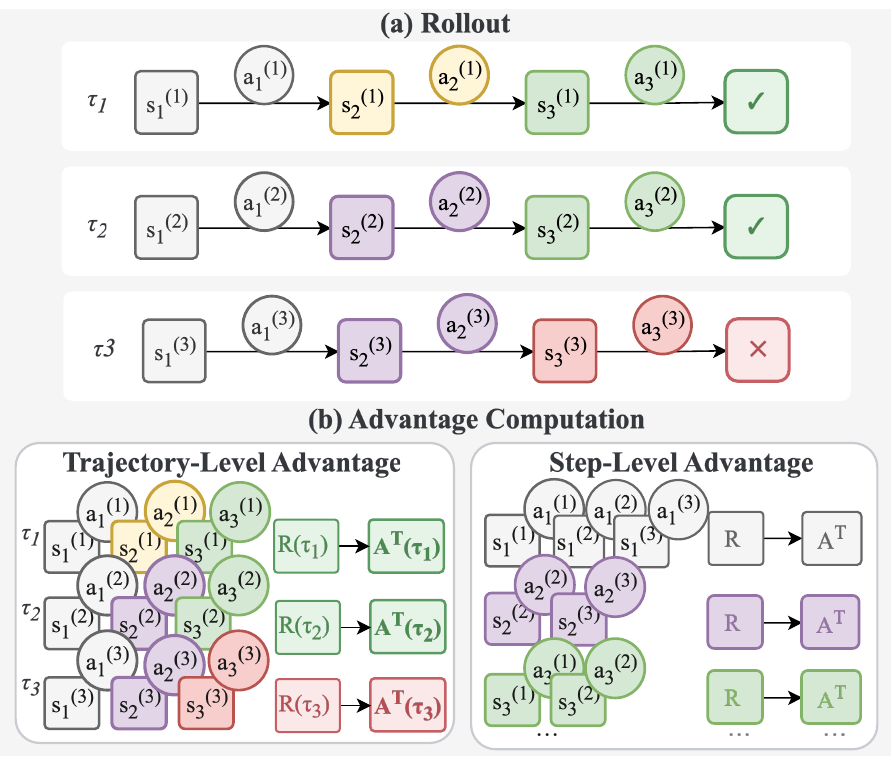}
\caption{Trajectory-level and same-state step-level credit for two successful rollouts and one failed rollout. Here, $\tau_i$ is rollout $i$; $s_t^{(i)}$, $a_t^{(i)}$, $R(\tau_i)$, and $A^{\mathrm{T}}(\tau_i)$ denote its state, action, final reward, and trajectory-level advantage. Matching colors mark the same state across rollouts.}
\label{fig:credit-assignment-overview}
\end{figure}

In recent years, large language models (LLMs) have evolved from single-turn text generators into capable agents that can reason, invoke external tools, and execute tasks over extended horizons \cite{yao2023reactsynergizingreasoningacting,schick2023toolformer,wang2023voyager}. Such agents are increasingly applied to long-horizon tasks such as web navigation and embodied instruction following \cite{deng2023mind2web,zhou2024webarena,shridhar2021alfworldaligningtextembodied}, where successful completion depends on a coherent sequence of interdependent decisions.
Reinforcement learning (RL) provides a natural framework for learning to improve such agents from final rewards at the task-level \cite{ouyang2022traininglanguagemodelsfollow,deepseekai2025deepseekr1,wang2025ragenunderstandingselfevolutionllm}. However, when supervision is reduced to a final reward, assigning credit to intermediate decisions becomes challenging. As illustrated in Figure~\ref{fig:credit-assignment-overview}, trajectory-level methods assign the same final-reward-derived advantage to all actions in a rollout, while step-level grouping methods compare actions taken under the same state and provide finer-grained credit \cite{williams1992simple,schulman2017proximalpolicyoptimizationalgorithms,feng2025groupingrouppolicyoptimizationllm}. GraphGPO \cite{cheng2026trajectorylevelattributiongraphbasedcredit} further organizes grouped on-policy rollouts into a task-local transition graph, where each transition represents a state--action--next-state interaction step, and propagates credit according to each state's shortest-path distance to a successful final state. Although this graph structure provides dense intermediate supervision, final goal distance primarily captures \emph{reachability}, rather than \emph{reliable progress}: states with the same distance can have substantially different probabilities of leading to success, while important intermediate stages may receive weak credit simply because they remain far from the final goal.

\begin{figure}[t]
\centering
\includegraphics[width=\columnwidth]{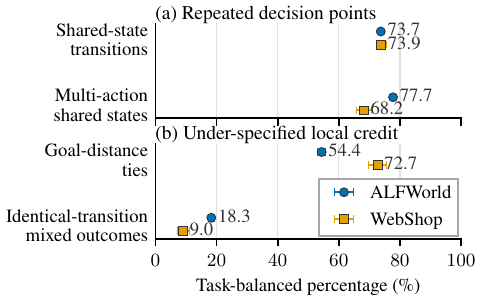}
\caption{Structural ambiguity in early GraphGPO rollouts. Shared states create opportunities for local branch comparison, but final-state-distance ties and mixed outcomes leave progress ambiguous.}
\label{fig:motivation-metrics}
\end{figure}

As shown in Figure~\ref{fig:motivation-metrics}, in ALFWorld and WebShop, about $74\%$ of the transitions originate from states shared among the rollouts, and most shared states contain multiple observed actions, providing recurring states and sibling branches from which to infer intermediate credit. However, the final-goal distance assigns the same credit to $54.4\%$ and $72.7\%$ of the same-state action pairs, respectively, while identical transitions can appear in both successful and failed trajectories. Thus, the rollout graph offers useful evidence for finer credit assignment, but exploiting it raises three questions about how to discover, weight, and validate intermediate credit anchors.

To address these questions, we propose the \textbf{MileGPO} (\underline{M}ilestone \underline{I}nference with \underline{L}ocal \underline{E}vidence for \underline{G}raph-Based \underline{P}olicy \underline{O}ptimization), illustrated in Figure~\ref{fig:method-overview}, with a sequence of nested credit designs: (1) \textbf{Milestone Discovery (MD)}: Not every state on a successful trajectory marks meaningful progress. MD uses final rewards and recurring rollout structures to identify candidate milestones from successful trajectories and recurring traps from failed ones. (2) \textbf{Reliability-Calibrated Shaping (RCS)}: Candidate anchors have unequal final-reward evidence. RCS weights their positive or negative influence by empirical reliability, strengthening well-supported candidates while suppressing uncertain ones. (3) \textbf{Progress-Contrastive Calibration (PCC)}: Final-reward association alone cannot distinguish genuine progress from incidental correlation. PCC evaluates local advancement, while its \textbf{Branch-Counterfactual Credit (BCC)} component compares sibling transitions from the same state. MileGPO modifies only step-level advantage estimation and requires no external process annotations or auxiliary inference.

We evaluate MileGPO on two challenging long-horizon agent benchmarks, ALFWorld and WebShop. MileGPO achieves state-of-the-art performance on both benchmarks, with consistent improvements over reproduced graph-based baselines. Its ALFWorld ID--OOD gap is only $1.69$ points, smaller than those of GiGPO ($1.89$) and GraphGPO ($3.78$), demonstrating stronger generalization to unseen task configurations. Further analysis shows that milestone discovery alone can introduce noisy credit, whereas reliability weighting recovers outcome-consistent preferences and local progress with the same-state branch evidence resolves distinctions obscured by final-goal distance. These findings attribute MileGPO's gains to calibrated intermediate credit rather than to indiscriminate milestone propagation.

Our main contributions are as follows:
\begin{itemize}
    \item \textbf{Revealing unreliable intermediate credit.} We reveal that final-goal-distance credit leaves many same-state branches indistinguishable and that success-visited states do not necessarily represent reliable progress.
    \item \textbf{Proposing a rollout-native policy optimization algorithm.} We introduce MileGPO, which learns from rollout graphs and on-policy rewards without external annotations, critics, reward models, or auxiliary inference.
    \item \textbf{Achieving strong empirical performance.} MileGPO achieves state-of-the-art performance on ALFWorld and WebShop, while ablations and diagnostics validate its credit-calibration mechanism.
\end{itemize}

\section{Related Work}

\subsection{Reinforcement Learning for LLM Agents}

LLM post-training builds on policy-gradient estimators such as REINFORCE, GAE, and PPO
\cite{williams1992simple,schulman2018highdimensionalcontinuouscontrolusing,schulman2017proximalpolicyoptimizationalgorithms}.
RLHF learns human-preference rewards
\cite{stiennon2022learningsummarizehumanfeedback,ouyang2022traininglanguagemodelsfollow,bai2022traininghelpfulharmlessassistant},
whereas DPO bypasses an explicit reward-model RL loop
\cite{rafailov2024directpreferenceoptimizationlanguage}.
For verifiable reasoning, RLOO, GRPO, and DAPO compare sampled responses without a value model
\cite{kool2019buy4reinforce,ahmadian2024backbasics,shao2024deepseekmathpushinglimitsmathematical,yu2025dapoopensourcellmreinforcement},
as exemplified by DeepSeek-R1, while process supervision adds intermediate feedback
\cite{deepseekai2025deepseekr1,lightman2023letsverifystepstep}.

These objectives extend to agents that interact with search engines, websites, embodied environments, and other agents over multiple turns
\cite{zhou2025sweetrltrainingmultiturnllm}.
Reflection, search-guided collection, and hierarchical actor--critic methods use verbal memory, tree search, or learned values
\cite{shinn2023reflexion,putta2024agentq,zhou2024archer},
while Agent Lightning decouples agent execution from training
\cite{luo2025agentlightningtrainai}.
Recent methods refine credit granularity: GiGPO compares actions at recurring states, HGPO conditions comparisons on interaction history
\cite{he2026hierarchyofgroupspolicyoptimizationlonghorizon},
StepPO, BiPACE, and Progress Advantage use step-aligned, action-conditioned, or policy-derived signals
\cite{wang2026steppostepalignedpolicyoptimization,wang2026bipacebisimulationguidedpolicyoptimization,oh2026neglectedfreelunchposttraining},
and graph-based methods merge grouped trajectories into task-local structures
\cite{wang2026groupgraphpolicyoptimizationlonghorizon}.

\begin{figure*}[t]
\centering
\includegraphics[width=\textwidth]{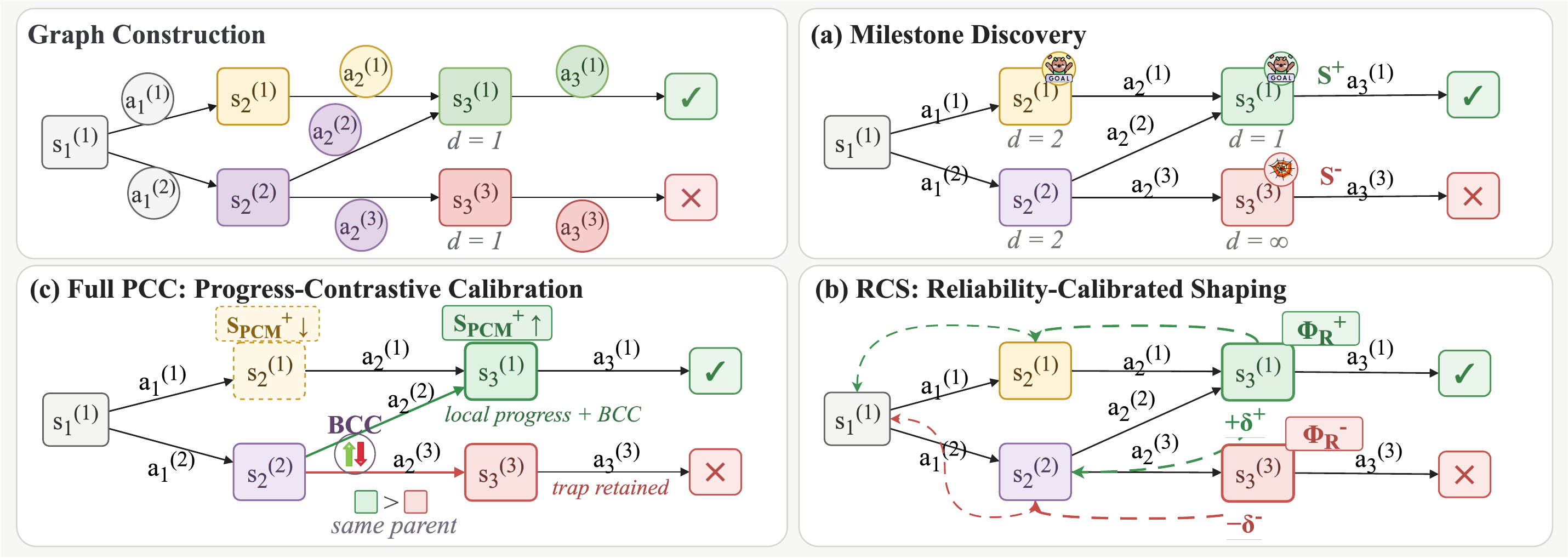}
\caption{Overview of MileGPO. Squares denote canonicalized states, circles denote sampled actions, and check and cross marks indicate success and failure. Here, $d$ is directed shortest-path distance ($\infty$ if unreachable); $\mathcal{S}^{+}$ and $\mathcal{S}^{-}$ are the milestone and trap sets; $\Phi_{\mathrm{R}}^{+}$ and $\Phi_{\mathrm{R}}^{-}$ are their score-weighted potentials; and $\delta^{+}$ and $\delta^{-}$ are the corresponding increases along a transition. MD discovers candidates, RCS weights and propagates them, and PCC adjusts positive scores using local progress and BCC comparisons between branches from the same state.}
\label{fig:method-overview}
\end{figure*}

\subsection{Process-Level Credit Assignment}

Process reward models provide step-level supervision but require process labels and may degrade under distribution shift
\cite{lightman2023letsverifystepstep,wang2024mathshepherd,zhang2025lessonsprm}.
Annotation-light alternatives estimate intermediate values through search or derive process signals from outcome rewards
\cite{chen2024alphamath,cui2025prime}.
TreeRPO and TreeRL share prefixes and compare branches through tree-structured rollouts, while GraphPO merges equivalent reasoning states during structured sampling
\cite{yang2025treerpo,hou2506treerl,zhan2026graphpo}.
These approaches introduce auxiliary process estimation or alter rollout collection through search and branching.

\section{Method}
\label{sec:method}

MileGPO adds intermediate credit to the final-goal credit used by GraphGPO. It has three steps. MD finds milestone and trap candidates from grouped on-policy rollouts. RCS weights these candidates by their scores and converts changes in graph-distance potentials into positive or negative credit. PCC rechecks positive candidates by measuring local progress and comparing transitions from the same source state. Figure~\ref{fig:method-overview} shows the full procedure. All three steps reuse one rollout graph and change only the advantage estimation.

\subsection{Preliminaries: Grouped Rollout Graph}

For each task $q$, the current policy samples a group of $K$ interaction trajectories $\mathcal{T}_q=\{\tau_i\}_{i=1}^{K}$. Each trajectory $\tau_i=(s_{i,0},a_{i,0},\ldots,s_{i,T_i})$ has length $T_i$ and a final environment reward $R_i^{\mathrm{env}}$, and $y_i=\mathbf{1}[R_i^{\mathrm{env}}>\theta_R]$ marks success under the task-specific threshold $\theta_R$. Following GraphGPO \cite{cheng2026trajectorylevelattributiongraphbasedcredit}, we merge all trajectories for task $q$ into a directed transition graph $\mathcal{G}_q=(\mathcal{V}_q,\mathcal{E}_q)$ whose nodes are canonicalized observations and whose edges $e=(u,v)$ are observed transitions. We write $g$ for the successful final state and $d(u,v)$ for the directed shortest-path distance, setting $d(u,v)=\infty$ when $v$ is unreachable, in which case every distance-decayed term below is zero.

GraphGPO then assigns each observed transition a dense return according to the distance from its destination to $g$:
\begin{equation}
 r^{\mathrm{G}}(u,v)=c\,\gamma_G^{d(v,g)},
 \label{eq:graphgpo-return}
\end{equation}
% [OLD] where $c$ sets the return scale and $\gamma_G\in(0,1]$ controls how quickly the return decays with distance. Transitions with the same source state are later normalized as one comparison group. This return provides final-goal reachability, but it cannot distinguish between two destinations at the same distance. MileGPO preserves this graph signal and supplements it with intermediate credit.
where $c$ sets the return scale and $\gamma_G\in(0,1]$ controls decay with distance. This return captures reachability but cannot distinguish destinations at the same graph distance; \emph{MileGPO preserves it and supplements it with intermediate credit as shown in Figure~\ref{fig:method-overview}.} 

\subsection{MD: Milestone Discovery}

MD uses the outcome rewards and transition structure of the rollout group to discover two kinds of intermediate targets.
A \emph{milestone candidate} is a nonfinal state visited by successful trajectories. A \emph{trap candidate} is a state observed only in failed trajectories that either appears across multiple failures or is revisited. MD first scores these states and then propagates every selected state with equal weight.

\subsubsection{Finding Candidates}

% [OLD] We count distinct trajectories rather than raw visits so that a loop within one trajectory cannot dominate the statistics. Let $\mathcal{T}_q^{+}=\{\tau_i:y_i=1\}$ and $\mathcal{T}_q^{-}=\{\tau_i:y_i=0\}$ be the successful and failed trajectories for task $q$. For a node $v$, let $\mathcal{T}(v)$ contain the trajectories that visit $v$, with $\mathcal{T}^{+}(v)$ and $\mathcal{T}^{-}(v)$ denoting its successful and failed subsets. We compute the overall success rate of the rollout group and the success rate conditioned on visiting $v$ as
Let $\mathcal{T}_q^{+}=\{\tau_i:y_i=1\}$ and $\mathcal{T}_q^{-}=\{\tau_i:y_i=0\}$ be the successful and failed trajectories for task $q$. For a node $v$, let $\mathcal{T}(v)$ contain the trajectories that visit $v$ (each counted once), with $\mathcal{T}^{+}(v)$ and $\mathcal{T}^{-}(v)$ denoting its successful and failed subsets. We compute the group and node-conditional success rates as
\begin{equation}
 p_q^{+}=\frac{|\mathcal{T}_q^{+}|}{|\mathcal{T}_q|}, \qquad
 p^{+}(v)=\frac{|\mathcal{T}^{+}(v)|}{|\mathcal{T}(v)|}.
\end{equation}

To score a milestone candidate $v$, MD first compares its conditional success rate $p^{+}(v)$ with the rollout-group success rate $p_q^{+}$. We retain only positive differences:
\begin{equation}
 \ell^{+}(v)=\max\left(p^{+}(v)-p_q^{+},0\right).
\end{equation}
Thus, $\ell^{+}(v)>0$ only when trajectories visiting $v$ have a higher success rate than the average across the rollout group. We separately measure how broadly $v$ appears across successful trajectories:

% We separately measure how broadly $v$ appears across successful trajectories:
\begin{equation}
 m(v)=\frac{|\mathcal{T}^{+}(v)|}{\max(|\mathcal{T}_q^{+}|,1)}.
 \label{eq:mandatory}
\end{equation}
% [OLD] Here, $m(v)$ is the fraction of sampled successful trajectories that visit $v$. We also define $C(v)$ as the max-normalized sum of the in-degree and out-degree of $v$ among success-visited states. MD combines these three quantities into the initial milestone score:
Here, $m(v)$ is the fraction of sampled successful trajectories that visit $v$. We also define $C(v)$ as the sum of the in-degree and out-degree of $v$ in $\mathcal{G}_q$, divided by the largest such sum over success-visited states, so that $C(v)\in[0,1]$ measures how central $v$ is among the states that successful trajectories traverse. MD combines these three quantities into the initial milestone score:
% We also define $C(v)$ as the max-normalized sum of the in-degree and out-degree of $v$ among success-visited states. MD combines these quantities into the initial milestone score:
\begin{equation}
 S_0^{+}(v)=w_s\ell^{+}(v)+w_m m(v)+w_c C(v).
 \label{eq:base-milestone-score}
\end{equation}
$w_s$, $w_m$, and $w_c$ weight the success-rate difference, successful trajectory coverage, and graph connectivity, respectively.

% [OLD] MD uses failed trajectories to find states that should receive negative credit. Let $p_q^{-}=1-p_q^{+}$ be the group failure rate
% [OLD-v2] Let $p_q^{-}=1-p_q^{+}$ be the group failure rate and let $p^{-}(v)=|\mathcal{T}^{-}(v)|/|\mathcal{T}(v)|$ be the failure rate among trajectories that visit $v$. For a state that no successful trajectory visits, we keep only the amount by which this rate exceeds the group average: $\ell^{-}(v)=\max(p^{-}(v)-p_q^{-},0)$. We also compute the fraction of failed trajectories that visit $v$ as $f^{-}(v)=|\mathcal{T}^{-}(v)|/\max(|\mathcal{T}_q^{-}|,1)$. Finally, $L(v)=r(v)/\max(|\mathcal{T}(v)|,1)$ measures repeated visits, where $r(v)$ counts visits to $v$ after its first occurrence in each trajectory. MD combines these quantities into the trap score
Let $p_q^{-}=1-p_q^{+}$ be the group failure rate and let $p^{-}(v)=|\mathcal{T}^{-}(v)|/|\mathcal{T}(v)|$ be the failure rate among trajectories that visit $v$. For a state that no successful trajectory visits, we keep only the amount by which this rate exceeds the group average: $\ell^{-}(v)=\max(p^{-}(v)-p_q^{-},0)$. We also compute the fraction of failed trajectories that visit $v$ as $f^{-}(v)=|\mathcal{T}^{-}(v)|/\max(|\mathcal{T}_q^{-}|,1)$. Finally, $L(v)=n_{\mathrm{re}}(v)/\max(|\mathcal{T}(v)|,1)$ is the average number of revisits per visiting trajectory, where $n_{\mathrm{re}}(v)$ counts visits to $v$ after its first occurrence in each trajectory. MD combines these quantities into the trap score: 
\begin{equation}
 S^{-}(v)=w_f\ell^{-}(v)+w_l f^{-}(v)L(v).
 \label{eq:trap-score}
\end{equation}
% The two terms favor states that are associated with failure and recur in failed behavior. We denote the selected milestones and traps by $\mathcal{S}^{+}$ and $\mathcal{S}^{-}$, respectively.
% [OLD] We denote the selected milestones and traps by $\mathcal{S}^{+}$ and $\mathcal{S}^{-}$, respectively.
where $w_f$ and $w_l$ weight the excess failure rate and the recurrence of $v$ in failed behavior. We denote the selected milestones and traps by $\mathcal{S}^{+}$ and $\mathcal{S}^{-}$, respectively.

\subsubsection{Uniform target propagation}
MD assigns every selected state unit weight and propagates it through the graph distance. With decay $\omega\in(0,1]$, the two potential maps are
\begin{equation}
 \begin{array}{rcl}
 \Phi_{\mathrm{S}}^{+}(s)&=&\displaystyle\max_{v\in\mathcal{S}^{+}}
 \omega^{d(s,v)}, \\
 \Phi_{\mathrm{S}}^{-}(s)&=&\displaystyle\max_{v\in\mathcal{S}^{-}}
 \omega^{d(s,v)}.
 \end{array}
 \label{eq:uniform-milestone-potential}
\end{equation}
% [OLD] For each state $s$, the maximum retains the largest distance-decayed target value. RCS next replaces the unit weights with candidate scores.
% [OLD-v2] RCS next replaces the unit weights with candidate scores.
Here, $\Phi_{\mathrm{S}}^{+}(s)$ and $\Phi_{\mathrm{S}}^{-}(s)$ measure how close $s$ is to the nearest milestone and the nearest trap, respectively, with the maximum retaining the largest distance-decayed target value. RCS next replaces the unit weights with candidate scores.

\subsection{RCS: Reliability-Calibrated Shaping}

RCS uses the same candidates as MD but weights them by their task-wise max-normalized scores $\overline S_0^{+}$ and $\overline S^{-}$:
\begin{equation}
 \begin{array}{rcl}
 \Phi_{\mathrm{R}}^{+}(s)&=&\displaystyle\max_{v\in\mathcal{S}^{+}}
 \overline S_{0}^{+}(v)\omega^{d(s,v)}, \\
 \Phi_{\mathrm{R}}^{-}(s)&=&\displaystyle\max_{v\in\mathcal{S}^{-}}
 \overline S^{-}(v)\omega^{d(s,v)}.
 \end{array}
 \label{eq:reliability-calibrated-potential}
\end{equation}
% [OLD] Higher-scoring targets therefore produce larger potentials. We keep milestones and traps separate because their contributions have opposite signs.
Thus $\Phi_{\mathrm{R}}^{+}(s)$ and $\Phi_{\mathrm{R}}^{-}(s)$ are the score-weighted counterparts of $\Phi_{\mathrm{S}}^{+}(s)$ and $\Phi_{\mathrm{S}}^{-}(s)$, so that higher-scoring targets produce larger potentials. We keep milestones and traps separate because their contributions have opposite signs.
% [OLD] RCS next checks whether a transition increases either potential. For a transition from $s_t$ to $s_{t+1}$, it discards negative changes:
RCS next checks whether a transition increases either potential. Writing $\Phi^{+}$ and $\Phi^{-}$ for the positive and negative potentials in use, which are $\Phi_{\mathrm{R}}^{+}$ and $\Phi_{\mathrm{R}}^{-}$ here, a transition from $s_t$ to $s_{t+1}$ discards negative changes:
\begin{equation}
 \delta_t^{+}=\max\left(
 \gamma_{\Phi}w_{+}\Phi^{+}(s_{t+1})
 -w_{+}\Phi^{+}(s_t),0\right),
 \label{eq:positive-shaping}
\end{equation}
\begin{equation}
 \delta_t^{-}=\max\left(
 \gamma_{\Phi}w_{-}\Phi^{-}(s_{t+1})
 -w_{-}\Phi^{-}(s_t),0\right).
 \label{eq:negative-shaping}
\end{equation}
% [OLD] Here, $w_{+}$ and $w_{-}$ scale the changes, and $\gamma_{\Phi}$ discounts the next-state potential. The return uses $\delta_t^{+}-\delta_t^{-}$, rewarding movement toward milestones and penalizing movement toward traps. This is a one-sided credit rule rather than policy-invariant potential shaping.
Here, $\delta_t^{+}$ and $\delta_t^{-}$ are the discounted increases in milestone and trap potential along the transition, $w_{+}$ and $w_{-}$ scale these changes, and $\gamma_{\Phi}$ discounts the next-state potential. The return uses $\delta_t^{+}-\delta_t^{-}$, rewarding movement toward milestones and penalizing movement toward traps. This is a one-sided credit rule rather than policy-invariant potential shaping.
% Here, $w_{+}$ and $w_{-}$ scale the changes, and $\gamma_{\Phi}$ discounts the next-state potential. This is a one-sided credit rule rather than policy-invariant potential shaping.

\subsection{PCC: Progress-Contrastive Calibration}

RCS uses group-level outcome rewards, so a state may score highly even when entering it makes little progress. PCC refines positive candidates with two transition-level scores. BCC compares transitions from the same source state, while local progress measures movement toward the goal and association with success. 
These scores update candidates rather than add rewards.

\subsubsection{Branch-Counterfactual Credit (BCC)}

At a source state $u$, grouped rollouts may contain several distinct outgoing edges. We treat each observed edge $e=(u,v)$ as one sampled branch from $u$. BCC compares the success rate of this branch with those of the other branches observed from the same state. Let $\mathcal{T}(e)$ contain the trajectories that use $e$, with $\mathcal{T}^{+}(e)$ and $\mathcal{T}^{-}(e)$ denoting its successful and failed subsets. Their rates are
% At a source state $u$, BCC compares each outgoing edge $e=(u,v)$ against the other observed branches from $u$. Let $\mathcal{T}(e)$ denote the trajectories using $e$, with $\mathcal{T}^{+}(e)$ and $\mathcal{T}^{-}(e)$ its successful and failed subsets. Their rates are
\begin{equation}
 p^{+}(e)=\frac{|\mathcal{T}^{+}(e)|}{|\mathcal{T}(e)|}, \qquad
 p^{-}(e)=\frac{|\mathcal{T}^{-}(e)|}{|\mathcal{T}(e)|}.
\end{equation}
Let $\mathcal{B}(u)$ contain these sampled branches. For $e_i\in\mathcal{B}(u)$, the BCC margin is
% [OLD] \begin{equation}
% [OLD]  \widetilde c_{\mathrm{bcc}}(e_i)=
% [OLD]  p^{+}(e_i)-\overline p^{+}_{-i}.
% [OLD]  \label{eq:bcc}
% [OLD] \end{equation}
% [OLD] Here, $\overline p^{+}_{-i}$ is the mean success rate of the other transitions in $\mathcal{B}(u)$. We divide each margin by the largest absolute margin in task $q$:
\begin{equation}
 \widetilde c_{\mathrm{bcc}}(e_i)=
 p^{+}(e_i)-\mu^{+}(e_i).
 \label{eq:bcc}
\end{equation}
Here, $\mu^{+}(e_i)$ is the mean success rate of the branches in $\mathcal{B}(u)\setminus\{e_i\}$, that is, of the alternatives observed at the same source state
; we set $\widetilde c_{\mathrm{bcc}}(e_i)=0$ when $e_i$ is the only sampled branch. We divide each margin by the largest absolute margin in task $q$:
% We divide each margin by the largest absolute margin in task $q$:
\begin{equation}
 c_{\mathrm{bcc}}(e_i)=
 \frac{\widetilde c_{\mathrm{bcc}}(e_i)}
 {\max_{e\in\mathcal{E}_q}|\widetilde c_{\mathrm{bcc}}(e)|}.
\end{equation}
% [OLD] A positive score means that $e_i$ succeeds more often than the other branches observed from $u$. For each candidate $v$, we retain the largest positive score among successful incoming transitions:
% A positive score means that $e_i$ succeeds more often than the other branches observed from $u$. For each candidate $v$, we retain the largest positive score among successful incoming transitions:
\begin{equation}
 b(v)=\max_{\substack{e=(u,v),\ c_{\mathrm{bcc}}(e)>0\\
 \mathcal{T}^{+}(e)\neq\emptyset}} c_{\mathrm{bcc}}(e).
 \label{eq:branch-evidence}
\end{equation}
% [OLD] Thus, $b(v)$ records the strongest observed branch preference leading to $v$; it is not a causal-effect estimate.
Thus, $b(v)$ records the strongest observed branch preference leading to $v$, with $b(v)=0$ when no incoming transition satisfies both conditions; it is not a causal-effect estimate.

\subsubsection{Local Progress}

% [OLD] BCC compares branches but does not measure progress toward the goal. For $e=(u,v)$, we therefore combine the distance reduction $\Delta d(e)=d(u,g)-d(v,g)$, the success-rate difference $p^{+}(v)-p_q^{+}$, and the excess failure rate $m^{-}(e)=\max(p^{-}(e)-\overline p^{-}_{-e},0)$ relative to sibling transitions, where $\overline p^{-}_{-e}$ is the mean failure rate of the other branches in $\mathcal{B}(u)$. When no sibling exists, $m^{-}(e)$ uses the node-level difference $\max(p^{-}(v)-(1-p_q^{+}),0)$. The progress score is
% [OLD] \begin{equation}
% [OLD]  g_{\mathrm{pg}}(e)=
% [OLD]  \alpha_d\Delta d(e)
% [OLD]  +\alpha_s\left[p^{+}(v)-p_q^{+}\right]
% [OLD]  -\alpha_f m^{-}(e),
% [OLD]  \label{eq:progress-gain}
% [OLD] \end{equation}
BCC compares branches but does not capture goal progress. For $e=(u,v)$, we combine distance reduction $\Delta d(e)=d(u,g)-d(v,g)$, success-rate gain $p^{+}(v)-p_q^{+}$, and excess failure rate $\ell^{-}(e)=\max(p^{-}(e)-\mu^{-}(e),0)$, where $\mu^{-}(e)$ is the mean failure rate of sibling branches in $\mathcal{B}(u)$. Thus, $\ell^{-}(e)$ is the edge-level counterpart of $\ell^{-}(v)$ and falls back to $\max(p^{-}(v)-p_q^{-},0)$ when no sibling exists. The progress score is:
\begin{equation}
 \psi(e)=
 \alpha_d\Delta d(e)
 +\alpha_s\left[p^{+}(v)-p_q^{+}\right]
 -\alpha_f \ell^{-}(e),
 \label{eq:progress-gain}
\end{equation}
% [OLD] where $\alpha_d$, $\alpha_s$, and $\alpha_f$ weight the three terms. Let $\mathcal{E}_{\mathrm{pg}}^{+}(v)$ contain successful incoming transitions to $v$ with $g_{\mathrm{pg}}(e)>0$. PCC takes the largest score and normalizes it over the task graph:
% [OLD] \begin{equation}
% [OLD]  \widetilde p(v)=\max_{e\in\mathcal{E}_{\mathrm{pg}}^{+}(v)}
% [OLD]  g_{\mathrm{pg}}(e),\qquad
% [OLD]  p(v)=\frac{\widetilde p(v)}
% [OLD]  {\max_{z\in\mathcal{V}_q}\widetilde p(z)}.
% [OLD]  \label{eq:progress-evidence}
% [OLD] \end{equation}
% [OLD] The resulting $p(v)\in[0,1]$ measures the best successful incoming progress to $v$. Undefined distance changes, unavailable branch comparisons, and zero normalizers yield zero.
where $\alpha_d$, $\alpha_s$, and $\alpha_f$ weight the three terms. Let $\mathcal{E}_{\mathrm{pg}}^{+}(v)$ contain successful incoming transitions to $v$ with $\psi(e)>0$. PCC takes the largest score and normalizes it over the graph:
\begin{equation}
\begin{aligned}
 \widetilde\psi(v)&=\max_{e\in\mathcal{E}_{\mathrm{pg}}^{+}(v)}
 \psi(e),\\
 \overline\psi(v)&=\frac{\widetilde\psi(v)}
 {\max_{v'\in\mathcal{V}_q}\widetilde\psi(v')}.
\end{aligned}
 \label{eq:progress-evidence}
\end{equation}
Here, $\widetilde\psi(v)$ is the best progress score among successful incoming transitions and $\overline\psi(v)\in[0,1]$ is its task-wise max-normalized value. Undefined distance changes, empty maxima, and zero normalizers yield zero.

\subsubsection{Updating Candidate Scores}
PCC combines the branch score and the progress score:
\begin{equation}
 E(v)=w_{\mathrm{bc}}b(v)+w_{\mathrm{pg}}\overline\psi(v).
 \label{eq:pcc-evidence}
\end{equation}
% [OLD] The weights $w_{\mathrm{bc}}$ and $w_{\mathrm{pg}}$ set their contributions. We set $z(v)=1$ when $p(v)>0$, $m(v)\geq\tau_m$, or, when branch retention is enabled, $b(v)>0$; otherwise, $z(v)=0$. BCC still contributes to $E(v)$ when another condition retains the candidate. PCC updates the score as
% [OLD-v2] We set $z(v)=1$ when $p(v)>0$, $m(v)\geq\tau_m$, or, when branch retention is enabled, $b(v)>0$; otherwise, $z(v)=0$. BCC still contributes to $E(v)$ when another condition retains the candidate. PCC updates the score as
% [OLD-v3] where $w_{\mathrm{bc}}$ and $w_{\mathrm{pg}}$ set their contributions. A retention indicator $z(v)$ marks candidates with local support: $z(v)=1$ when $p(v)>0$, when coverage $m(v)$ reaches a threshold $\tau_m$, or, when branch retention is enabled, when $b(v)>0$; otherwise, $z(v)=0$. BCC contributes to $E(v)$ when another condition retains the candidate. PCC updates the score as:
where $w_{\mathrm{bc}}$ and $w_{\mathrm{pg}}$ set their contributions. A retention indicator $z(v)$ marks candidates with local support: $z(v)=1$ when $\overline\psi(v)>0$, when coverage $m(v)$ reaches a threshold $\theta_m$, or when $\kappa_{\mathrm{bc}}=1$ and $b(v)>0$; otherwise, $z(v)=0$. The binary switch $\kappa_{\mathrm{bc}}$ controls whether branch evidence alone may retain a candidate, and BCC contributes to $E(v)$ regardless of which condition retains it. PCC updates the score as:
\begin{equation}
 S_{\mathrm{pcc}}^{+}(v)=
 \begin{cases}
 S_0^{+}(v)\left[1+w_{\mathrm{pcc}}E(v)\right], & z(v)=1,\\
 \rho S_0^{+}(v), & z(v)=0.
 \end{cases}
 \label{eq:pcc-reweight}
\end{equation}
Here, $w_{\mathrm{pcc}}$ amplifies retained candidates according to $E(v)$, while $\rho\in[0,1]$ shrinks the others. Max-normalization within task $q$ then produces $\overline S_{\mathrm{pcc}}^{+}$.

\begin{table*}[t]
\centering
\begingroup
% 使用正文 10 pt 字号；轻微收紧基础列间距后再弹性铺满双栏宽度。
\resizebox{\textwidth}{!}{%
\begin{tabular}{@{}l|ccccccc|cc@{}}
\toprule
\textbf{Method} & \multicolumn{7}{c|}{\textbf{ALFWorld}} & \multicolumn{2}{c}{\textbf{WebShop}} \\
\cmidrule(lr){2-8}\cmidrule(lr){9-10}
 & \textbf{Pick} & \textbf{Clean} & \textbf{Cool} & \textbf{Look} & \textbf{Heat} & \textbf{Pick2} & \textbf{All} & \textbf{Score} & \textbf{Succ.} \\
\midrule
\multicolumn{10}{@{}l}{\textbf{Closed-Source Prompting}} \\
GPT-4o$^\dagger$ & $75.3$ & $60.8$ & $31.2$ & $56.7$ & $21.6$ & $49.8$ & $48.0$ & $31.8$ & $23.7$ \\
Gemini-2.5-Pro$^\dagger$ & $92.8$ & $63.3$ & $62.1$ & $69.0$ & $26.6$ & $58.7$ & $60.3$ & $42.5$ & $35.9$ \\
\midrule
\multicolumn{10}{@{}l}{\textbf{Open-Source Prompting}} \\
Qwen2.5$^\dagger$ & $5.9$ & $5.5$ & $3.3$ & $9.7$ & $4.2$ & $0.0$ & $4.1$ & $23.1$ & $5.2$ \\
ReAct$^\dagger$ & $17.4$ & $20.5$ & $15.7$ & $6.2$ & $7.7$ & $2.0$ & $12.8$ & $40.1$ & $11.3$ \\
Reflexion$^\dagger$ & $35.3$ & $22.2$ & $21.7$ & $13.6$ & $19.4$ & $3.7$ & $21.8$ & $55.8$ & $21.9$ \\
\midrule
\multicolumn{10}{@{}l}{\textbf{RL-Based Training}} \\
PPO$^\dagger$ & $64.8$ & $40.5$ & $57.1$ & $60.6$ & $46.4$ & $47.4$ & $54.4$ & $73.8$ & $51.5$ \\
& $(\pm 3.5)$ & $(\pm 6.9)$ & $(\pm 4.9)$ & $(\pm 6.6)$ & $(\pm 4.0)$ & $(\pm 1.9)$ & $(\pm 3.1)$ & $(\pm 3.0)$ & $(\pm 2.9)$ \\
RLOO$^\dagger$ & $88.3$ & $52.8$ & $71.0$ & $62.8$ & $66.4$ & $56.9$ & $69.7$ & $73.9$ & $52.1$ \\
& $(\pm 3.0)$ & $(\pm 8.6)$ & $(\pm 5.9)$ & $(\pm 8.7)$ & $(\pm 5.5)$ & $(\pm 4.7)$ & $(\pm 2.5)$ & $(\pm 5.6)$ & $(\pm 6.7)$ \\
GRPO$^\dagger$ & $82.89$ & $82.14$ & $73.86$ & $78.57$ & $77.78$ & $71.43$ & $77.86$ & $84.73$ & $71.35$ \\
& $(\pm 3.6)$ & $(\pm 6.4)$ & $(\pm 6.8)$ & $(\pm 0.0)$ & $(\pm 4.5)$ & $(\pm 3.9)$ & $(\pm 1.3)$ & $(\pm 0.5)$ & $(\pm 2.1)$ \\
GiGPO$^\dagger$ & $98.81$ & $95.16$ & $81.46$ & $78.57$ & $94.44$ & $93.65$ & $90.88$ & $87.94$ & $73.83$ \\
& $(\pm 1.7)$ & $(\pm 3.9)$ & $(\pm 0.6)$ & $(\pm 0.0)$ & $(\pm 0.0)$ & $(\pm 5.9)$ & $(\pm 1.0)$ & $(\pm 0.4)$ & $(\pm 2.3)$ \\
GraphGPO$^\dagger$ & $95.15$ & $100.0$ & $85.26$ & $85.71$ & $96.30$ & $93.65$ & $92.71$ & $89.29$ & $78.65$ \\
& $(\pm 1.6)$ & $(\pm 0.0)$ & $(\pm 2.6)$ & $(\pm 5.8)$ & $(\pm 2.6)$ & $(\pm 2.2)$ & $(\pm 1.3)$ & $(\pm 1.5)$ & $(\pm 3.9)$ \\
\midrule
GiGPO$^\ast$ & $78.52$ & $87.26$ & $98.31$ & $96.24$ & $\mathbf{92.97}$ & $92.15$ & $90.17$ & $89.81$ & $76.17$ \\
& $(\pm 3.9)$ & $(\pm 0.7)$ & $(\pm 0.7)$ & $(\pm 1.9)$ & $\mathbf{(\pm 1.8)}$ & $(\pm 0.4)$ & $(\pm 0.1)$ & $(\pm 0.9)$ & $(\pm 1.6)$ \\
GraphGPO$^\ast$ & $78.96$ & $91.82$ & $\mathbf{98.61}$ & $99.44$ & $89.31$ & $94.19$ & $91.47$ & $88.24$ & $74.80$ \\
& $(\pm 3.8)$ & $(\pm 3.3)$ & $\mathbf{(\pm 0.3)}$ & $(\pm 0.8)$ & $(\pm 1.5)$ & $(\pm 2.0)$ & $(\pm 0.5)$ & $(\pm 1.4)$ & $(\pm 2.4)$ \\
\rowcolor{gray!20}
\textbf{MileGPO} & $\mathbf{90.47}$ & $\mathbf{93.29}$ & $98.31$ & $\mathbf{100.00}$ & $90.66$ & $\mathbf{98.02}$ & $\mathbf{94.60}$ & $\mathbf{90.29}$ & $\mathbf{78.58}$ \\
\rowcolor{gray!20}
& $\mathbf{(\pm 2.8)}$ & $\mathbf{(\pm 1.3)}$ & $(\pm 0.7)$ & $\mathbf{(\pm 0.0)}$ & $(\pm 1.5)$ & $\mathbf{(\pm 0.6)}$ & $\mathbf{(\pm 0.3)}$ & $\mathbf{(\pm 0.8)}$ & $\mathbf{(\pm 1.2)}$ \\
\bottomrule
\end{tabular}%
}
\endgroup
\caption{Performance (\%) on ALFWorld and WebShop, grouped into closed-source prompting, open-source prompting, and RL-based training. Rows marked with $\dagger$ are reported by GraphGPO \cite{cheng2026trajectorylevelattributiongraphbasedcredit}, rows marked with $\ast$ are our reproduced baselines, and the unmarked row denotes MileGPO.}
% \caption{Performance (\%) on ALFWorld and WebShop. $\dagger$: GraphGPO-reported; $\ast$: reproduced; no mark: ours. All results are averaged over three random seeds.}
\label{tab:main-results}
\end{table*}

\subsection{MileGPO Return and Policy Update}

% [OLD] MileGPO inserts the PCC scores into the positive RCS potential:
MileGPO instantiates the $\Phi^{+}$ of Equations~\ref{eq:positive-shaping} and~\ref{eq:negative-shaping} by inserting the PCC scores into the RCS form:
\begin{equation}
 \Phi^{+}(s)=\max_{v\in\mathcal{S}^{+}}
 \overline S_{\mathrm{pcc}}^{+}(v)\,\omega^{d(s,v)}.
 \label{eq:positive-potential}
\end{equation}
The trap potential remains $\Phi^{-}=\Phi_{\mathrm{R}}^{-}$ from Equation~\ref{eq:reliability-calibrated-potential}. Using the resulting increments $\delta_t^{\pm}$, and suppressing the trajectory index on step-level quantities for readability, the MileGPO step return, denoted by $r_t^{\mathrm{M}}$, is
\begin{equation}
 r_t^{\mathrm{M}}=
 c\,\gamma_G^{d(s_{t+1},g)}
 +c\lambda\left(\delta_t^{+}-\delta_t^{-}\right).
 \label{eq:milegpo-return}
\end{equation}
The first term is the GraphGPO return, and $\lambda$ scales the milestone and trap correction. Directly normalizing their sum can let a small correction dominate when GraphGPO returns are tied. We instead normalize the graph and MileGPO returns separately. Let $\mathrm{Norm}_{q,s_t}$ normalize transitions with the same task and source state, and let $r_t^{\mathrm{G}}=r^{\mathrm{G}}(s_t,s_{t+1})$:
% The first term is the GraphGPO return and $\lambda$ scales the correction. We normalize the graph and MileGPO returns separately to prevent ties from being dominated; let $\mathrm{Norm}_{q,s_t}$ normalize transitions with the same task and source state, and let $r_t^{\mathrm{G}}=r^{\mathrm{G}}(s_t,s_{t+1})$:
\begin{equation}
 \widehat A_t^{\mathrm{G}}
 =\mathrm{Norm}_{q,s_t}(r_t^{\mathrm{G}}),\quad
 \widehat A_t^{\mathrm{mix}}
 =\mathrm{Norm}_{q,s_t}(r_t^{\mathrm{M}}).
 \label{eq:separated-step-advantages}
\end{equation}
% [OLD] Their difference, $\widehat A_t^{\mathrm{res}}=\widehat A_t^{\mathrm{mix}}- \widehat A_t^{\mathrm{G}}$, isolates the milestone correction. With trajectory score $Z_i=\sum_{\ell}r_{i,\ell}^{\mathrm{tok}}$, each token in action $a_t$ receives
% [OLD-v2] Their difference, $\widehat A_t^{\mathrm{res}}=\widehat A_t^{\mathrm{mix}}- \widehat A_t^{\mathrm{G}}$, isolates the milestone correction. Let $Z_i=\sum_{\ell}r_{i,\ell}^{\mathrm{tok}}$ be the trajectory score of $\tau_i$, summing its token-level rewards $r_{i,\ell}^{\mathrm{tok}}$, and let $\mathrm{Norm}_{q}$ normalize these scores within the rollout group of task $q$. Each token in action $a_t$ then receives:
% [OLD-v3] Here, $\widehat A_t^{\mathrm{G}}$ is the step advantage obtained from the GraphGPO return alone, while $\widehat A_t^{\mathrm{mix}}$ is the step advantage obtained from the shaped MileGPO return. Their difference, $\widehat A_t^{\mathrm{res}}=\widehat A_t^{\mathrm{mix}}- \widehat A_t^{\mathrm{G}}$, isolates the milestone correction. Let $Z_i=\sum_{\ell}r_{i,\ell}^{\mathrm{tok}}$ be the trajectory score of $\tau_i$, summing its token-level rewards $r_{i,\ell}^{\mathrm{tok}}$, and let $\mathrm{Norm}_{q}$ normalize these scores within the rollout group of task $q$. Each token in action $a_t$ then receives:
Here, $\widehat A_t^{\mathrm{G}}$ is the step advantage obtained from the GraphGPO return alone, while $\widehat A_t^{\mathrm{mix}}$ is the step advantage obtained from the shaped MileGPO return. Their difference, $\widehat A_t^{\mathrm{res}}=\widehat A_t^{\mathrm{mix}}- \widehat A_t^{\mathrm{G}}$, isolates the milestone correction. The episode-level term restores the trajectory index: let $Z_i=\sum_{\ell}r_{i,\ell}^{\mathrm{tok}}$ be the trajectory score of $\tau_i$, summing the per-token rewards $r_{i,\ell}^{\mathrm{tok}}$ that the training framework assigns to its $\ell$-th token, and let $\mathrm{Norm}_{q}$ normalize these scores within the rollout group of task $q$. Each token in action $a_t$ then receives:
\begin{equation}
A_t=w_{\mathrm{step}}\left[
\widehat A_t^{\mathrm{G}}
+\eta\widehat A_t^{\mathrm{res}}\right]
+w_{\mathrm{episode}}\,\mathrm{Norm}_{q}(Z_i).
\label{eq:joint-advantage}
\end{equation}
% [OLD] Here, $w_{\mathrm{step}}$ and $w_{\mathrm{episode}}$ weight step- and episode-level advantages. The coefficient $\eta$ controls the correction: $\eta=0$ recovers GraphGPO, $\eta=1$ recovers the normalized MileGPO return, and intermediate values interpolate between them.
Here, $w_{\mathrm{step}}$ and $w_{\mathrm{episode}}$ weight step- and episode-level advantages, and $\eta$ scales the milestone correction.

% [OLD] The actor uses the same clipped token-level objective and KL regularization as the group-based baselines. \textbf{MD, RCS, and PCC use only the rollout graph and the success or failure of trajectories in the current on-policy batch.} MileGPO therefore requires no external milestone annotations, critic, verifier, reward model, extra environment interaction, or auxiliary model inference.
The actor uses the same clipped token-level objective and KL regularization as the group-based baselines. \textbf{MD, RCS, and PCC use only the rollout graph and on-policy outcome rewards}, requiring no external process annotations, critic, reward model, or auxiliary inference.

\section{Experiments}
\label{sec:experiments}

% \begin{table}[t]
% \centering
% \begingroup
% \small
% \setlength{\tabcolsep}{3.5pt}
% \newcommand{\ids}[2]{$#1_{\mbox{\scriptsize$\pm #2$}}$}
% \newcommand{\idsb}[2]{$\mathbf{#1}_{\mbox{\scriptsize$\pm\mathbf{#2}$}}$}
% \begin{tabular}{@{}lccc@{}}
% \toprule
% \textbf{Method} & \textbf{ID $\uparrow$} & \textbf{OOD $\uparrow$} & \textbf{Gap $\downarrow$} \\
% \midrule
% GiGPO & \ids{92.06}{0.72} & \ids{90.17}{0.09} & $1.89$ \\
% GraphGPO & \ids{95.25}{0.60} & \ids{91.47}{0.51} & $3.78$ \\
% \textbf{MileGPO} & \idsb{96.29}{0.55} & \idsb{94.60}{0.33} & $\mathbf{1.69}$ \\
% \bottomrule
% \end{tabular}
% \caption{ALFWorld success rates (\%) on the ID and OOD test sets, averaged over three evaluation seeds. Gap is ID minus OOD; lower is better.}
% \label{tab:alfworld-id-ood}
% \endgroup
% \end{table}

\begin{table}[t]
\centering
\begingroup
% AAAI 模板允许将列间距压缩至 1 mm；括号内标准差使用 9 pt。
% \setlength{\tabcolsep}{1mm}
\newcommand{\meanstd}[2]{$#1\,\mbox{\small$(#2)$}$}
\newcommand{\meanstdb}[2]{$\mathbf{#1}\,\mbox{\small$\mathbf{(#2)}$}$}
\resizebox{\columnwidth}{!}{%
\begin{tabular}{@{}lccc@{}}
\toprule
\textbf{Method} & \textbf{ID $\uparrow$} & \textbf{OOD $\uparrow$} & \textbf{Gap $\downarrow$} \\
\midrule
GiGPO & \meanstd{92.06}{0.72} & \meanstd{90.17}{0.09} & $1.89$ \\
GraphGPO & \meanstd{95.25}{0.60} & \meanstd{91.47}{0.51} & $3.78$ \\
\textbf{MileGPO} & \meanstdb{96.29}{0.55} & \meanstdb{94.60}{0.33} & $\mathbf{1.69}$ \\
\bottomrule
\end{tabular}%
}
\caption{ALFWorld success rates (\%) on the ID and OOD.}
\label{tab:alfworld-id-ood}
\endgroup
\end{table}

\begin{table}[t]
\centering
\begingroup
\small
% 单栏消融表使用模板允许的 9 pt 字号和最小列间距。
\setlength{\tabcolsep}{1mm}
\newcommand{\ablrs}[2]{$#1\,\mbox{\small$(#2)$}$}
\newcommand{\ablrsb}[2]{$\mathbf{#1}\,\mbox{\small$\mathbf{(#2)}$}$}
\resizebox{\columnwidth}{!}{%
\begin{tabular}{@{}lcccc@{}}
\toprule
& \multicolumn{2}{c}{\textbf{ALFWorld}} & \multicolumn{2}{c}{\textbf{WebShop}} \\
\cmidrule(lr){2-3}\cmidrule(lr){4-5}
\textbf{Method} & \textbf{ID $\uparrow$} & \textbf{OOD $\uparrow$} & \textbf{Score $\uparrow$} & \textbf{Succ. $\uparrow$} \\
\midrule
\textbf{MileGPO} & \ablrsb{96.3}{0.6} & \ablrsb{94.6}{0.3} & \ablrsb{90.3}{0.8} & \ablrsb{78.6}{1.2} \\
w/o PCC & \ablrs{95.9}{0.5} & \ablrs{90.9}{0.5} & \ablrs{87.9}{1.9} & \ablrs{77.2}{2.2} \\
w/o PCC, RCS & \ablrs{95.3}{0.0} & \ablrs{89.7}{0.3} & \ablrs{87.4}{1.5} & \ablrs{73.8}{2.2} \\
\bottomrule
\end{tabular}%
}
\endgroup
\caption{MileGPO ablation on ALFWorld and WebShop (\%).}
\label{tab:cumulative-ablation}
\end{table}

% We evaluate MileGPO together with two controlled predecessor variants. \textbf{Milestone Discovery} tests rollout-native intermediate milestones under uniform propagation. \textbf{Reliability-Calibrated Shaping (RCS)} tests a unified reliability-weighted and polarity-consistent shaping rule. The full \textbf{MileGPO} method further applies PCC with progress and branch-counterfactual evidence to downweight weak anchors. The experiments examine four aspects of the resulting credit signal: whether MileGPO improves final task performance, whether its gains align with the structural ambiguity of the rollout graph, how the three nested designs resolve different sources of noisy credit, and whether this refinement adds meaningful computational cost.

\subsection{Experimental Setup}

\subsubsection{Benchmarks and Evaluation}
\mbox{ALFWorld}~\cite{shridhar2021alfworldaligningtextembodied} requires an agent to complete household tasks through text-based navigation and object manipulation, while WebShop~\cite{yao2023webshopscalablerealworldweb} requires it to search for and purchase products that satisfy natural-language instructions. They evaluate embodied planning and grounded web interaction, respectively. For ALFWorld, we train on 3,553 tasks and evaluate on both the in-distribution (ID) and out-of-distribution (OOD) splits. For WebShop, we follow the repository's evaluation protocol. GiGPO and GraphGPO serve as the primary step-group and graph-credit baselines, respectively. We report success rates on both benchmarks and WebShop task scores. For ALFWorld, we additionally include the ID--OOD gap to evaluate generalization. Most results are averaged over 3 random seeds during testing.

\subsubsection{Implementation Details}
All local runs use Qwen2.5-1.5B-Instruct~\cite{qwen2024qwen25} and share the model, rollout, optimizer, and environment configuration. Following the shared agent protocol~\cite{feng2025groupingrouppolicyoptimizationllm}, the policy observes the two most recent interaction steps and generates reasoning within \texttt{<think>} tags followed by an action within \texttt{<action>} tags. Each iteration samples 16 task groups with eight rollouts per group. We train on two NVIDIA H20 GPUs. Since the validation curves of the original GraphGPO and GiGPO implementations under their reported configurations had not converged, we extended training for all local methods to 300 optimization steps and selected the checkpoint with the best validation performance for evaluation. We use maximum horizons of 50 steps on ALFWorld and 30 on WebShop, an actor learning rate of $10^{-6}$, a KL coefficient of $0.01$, and an invalid-action penalty of $0.1$. More implementation details are provided in Appendix A.

\subsection{Main Results}

Table~\ref{tab:main-results} shows that MileGPO consistently improves over the reproduced baselines. On ALFWorld, it raises overall success by $3.13$ points over GraphGPO and $4.43$ points over GiGPO. On WebShop, it improves success by $3.78$ and $2.41$ points and task score by $2.05$ and $0.48$ points over GraphGPO and GiGPO, respectively. The gains in both strict success and graded task score indicate better task completion as well as higher-quality partial progress.

Table~\ref{tab:alfworld-id-ood} further shows that MileGPO reduces the ID--OOD gap to $1.69$ points, compared with $3.78$ for GraphGPO and $1.89$ for GiGPO. This stronger generalization is consistent with MileGPO learning transferable local decision preferences: RCS suppresses milestones with weak or mixed outcome support, while PCC distinguishes competing transitions through local progress and BCC.

MileGPO also exhibits lower result variance than the reproduced GraphGPO baseline on both benchmarks. By resolving local credit ties rather than relying only on final-goal distance, MileGPO provides a more discriminative credit signal, offering a plausible explanation for its simultaneous improvements in performance, generalization, and stability.

\subsection{Ablation Study}

Table~\ref{tab:cumulative-ablation} evaluates PCC and RCS through cumulative removal. Each removal degrades performance across both benchmarks, while the relatively small changes on ALFWorld ID contrast with the larger losses on ALFWorld OOD and WebShop, indicating that the two components primarily improve generalization and task completion rather than fitting the in-distribution evaluation set. Removing PCC decreases ALFWorld OOD success by $3.71$ points and widens the ID--OOD gap by $3.32$ points, while reducing WebShop task score and success by $2.38$ and $1.43$ points, suggesting that local progress and branch comparisons primarily aid generalization and partial task completion. Further removing RCS causes the largest additional drop in WebShop success ($3.32$ points) and a $1.24$-point decrease on ALFWorld OOD. PCC and RCS therefore play complementary roles: PCC sharpens local preferences and supports transfer, whereas RCS suppresses weak or inconsistent milestones and stabilizes which intermediate signals are propagated.

\begin{figure*}[t]
\centering
\includegraphics[width=\textwidth]{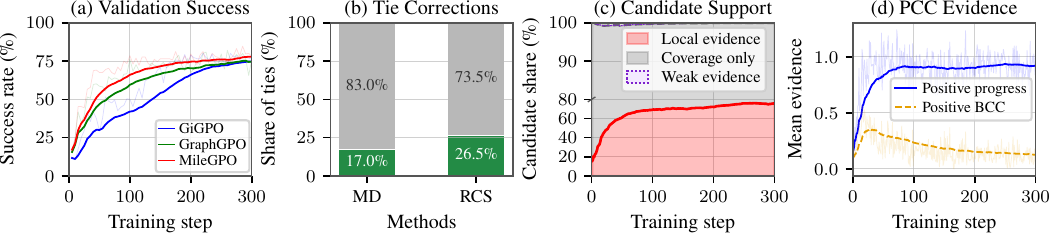}
\caption{WebShop credit-calibration diagnostics. (a) Validation success compares three locally trained methods. (b) Tie-correction rate measures the fraction of opposite-outcome transition pairs with equal GraphGPO credit, where shaped returns rank successful transitions higher. (c) Candidate support partitions success-visited candidates into those with local evidence (positive progress or BCC), successful-path coverage only, and weak candidates. (d) PCC evidence separates positive progress from positive sibling-branch contrast.}
\label{fig:pcc-replay-composite}
\end{figure*}

\subsection{Credit-Calibration Diagnostics}

To further examine how MileGPO affects the training process, we use WebShop as a representative case because it exhibits a higher rate of final-distance ties than ALFWorld. Figure~\ref{fig:pcc-replay-composite} tracks the evolution of candidate evidence and validation performance during training, as well as whether shaped returns recover preference information obscured by GraphGPO's final-distance credit. Panel (a) reports validation success throughout optimization, showing the training dynamics of the three methods. Panel (b) shows that RCS corrects a larger fraction of GraphGPO ties than uniform Milestone Discovery, indicating that reliability-weighted shaping recovers more outcome-consistent preferences. Panel (c) partitions success-visited candidates before PCC reweighting. The increasing local-evidence share suggests that more candidate milestones are supported by transition-level evidence. More illustrative examples of how MileGPO resolves ambiguous credit are provided in Appendix B.

\subsection{Mechanism Analysis Across Environments}

Figure~\ref{fig:motivation-metrics} and Table~\ref{tab:main-results} show that MileGPO benefits from resolving such ambiguity. ALFWorld and WebShop have nearly identical shared-state transition coverage ($73.7\%$ versus $73.9\%$), showing that both environments contain enough recurring structure to mine intermediate anchors. The decisive difference is ambiguity within that structure: final-goal distance ties $54.4\%$ of same-state action pairs in ALFWorld but $72.7\%$ in WebShop. Accordingly, MileGPO produces its clearest gain on WebShop, improving both success and task score. This result distinguishes the source of the gain from graph reuse alone. GraphGPO already aggregates recurring states, yet a distance-based return cannot rank actions that lead to equally distant successors. MileGPO extracts the missing signal from two relationships available in the same rollout group: whether a transition makes local progress and whether it outperforms sibling transitions from the same parent. The larger WebShop improvement, therefore, matches the mechanism precisely: the environment with more unresolved local comparisons benefits more from PCC.

\subsection{Efficiency and Computational Cost}
\begin{figure}[t]
    \centering
    \includegraphics[width=0.98\columnwidth]{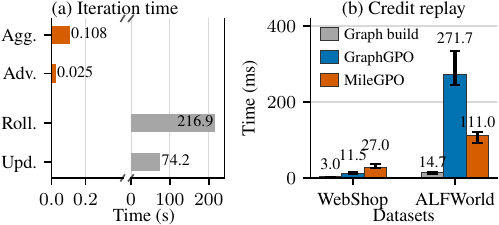}
    \caption{Efficiency analysis. (a) Published GraphGPO iteration times;
    Agg., Adv., Roll., and Upd. denote graph aggregation, graph advantage,
     rollout, and policy update, respectively.
    (b) Median single-thread CPU replay time with IQRs over five stored batches;
    graph construction is shared.}
    \label{fig:efficiency-overview}
\end{figure}

We reuse the rollout graph and add no environment interactions, model passes, critics, parameters, or activation caches. For a graph with $|\mathcal{V}|$ nodes, $|\mathcal{E}|$ edges, $N$ transitions, and maximum out-degree $\Delta$, final-distance search costs $O((|\mathcal{V}|+|\mathcal{E}|)\log|\mathcal{V}|)$, candidate statistics cost $O(N)$, and sibling comparisons cost $O(|\mathcal{E}|\Delta)$. The two target potentials add $O(|\mathcal{V}|+|\mathcal{E}|)$ storage, leaving model-scale and rollout complexity unchanged. Figure~\ref{fig:efficiency-overview}(a) compares post-rollout graph processing with the substantially larger rollout and policy-update costs, while Figure~\ref{fig:efficiency-overview}(b) reports single-thread CPU replay results on WebShop and ALFWorld. Across both analyses, MileGPO's post-rollout computation remains lightweight relative to model-level training costs. These results show that MileGPO improves the existing credit signal without additional environment interactions or model inference, while confining its overhead to graph processing.

\section{Conclusion}

% [OLD] We study a central limitation of graph-based credit assignment for long-horizon LLM agents: final-goal distance captures reachability. Still, it can leave locally competing actions indistinguishable and cannot determine whether a success-visited state represents reliable progress.
% [OLD] We introduce \textbf{MileGPO}, which extracts intermediate credit from grouped on-policy rollouts through milestone and trap discovery, reliability-calibrated shaping, and progress-contrastive calibration with same-state branch evidence.
% [OLD] Because MileGPO only post-processes the rollout graph, it introduces no additional supervision, auxiliary models, or environment interaction.
% [OLD] Across ALFWorld and WebShop, MileGPO achieves state-of-the-art performance.
% [OLD] Ablations show that milestone discovery alone is insufficient, whereas reliability weighting and local progress and branch evidence provide complementary improvements.
% [OLD] Credit-calibration diagnostics and replay profiling further show that these gains align with resolving local credit ambiguity while confining the added computation to lightweight post-rollout graph processing.
% [OLD] These results demonstrate that existing rollout groups contain sufficient local evidence to provide useful process-level credit without additional supervision or interaction.
We study a central limitation of graph-based credit assignment for long-horizon LLM agents: final-goal distance captures reachability but can leave locally competing actions indistinguishable and cannot determine whether a success-visited state represents reliable progress. We introduce \textbf{MileGPO}, which extracts intermediate credit from grouped on-policy rollouts through milestone and trap discovery, reliability-calibrated shaping, and progress-contrastive calibration with same-state branch evidence, requiring no additional supervision, auxiliary models, or environment interaction. Across ALFWorld and WebShop, MileGPO achieves state-of-the-art performance. Ablations show that milestone discovery alone is insufficient, whereas reliability weighting and local progress and branch evidence provide complementary improvements. Credit-calibration diagnostics and replay profiling further show that these gains align with resolving local credit ambiguity while confining the added computation to lightweight post-rollout graph processing. These results demonstrate that existing rollout groups contain sufficient local evidence to provide useful process-level credit.

% \section*{Limitations}

% MileGPO derives its evidence from grouped on-policy rollouts, so its credit
% quality depends on sufficient state and transition recurrence and on diverse
% outcomes within each group. Sparse rollout graphs or batches with few
% successful trajectories may yield weak milestone and trap statistics. Our
% evaluation is limited to ALFWorld and WebShop with Qwen2.5-1.5B-Instruct; the
% results do not establish generalization to other environments, larger models,
% or multimodal agents. In addition, same-state comparison relies on the graph's
% state representation identifying equivalent situations: noisy or overly
% coarse state matching may miss useful comparisons or merge distinct states.
% Finally, the theoretical properties concern the credit transformation on a
% fixed sampled rollout graph and do not by themselves guarantee policy
% improvement across optimization updates.

\bibliography{milegpo}

% ARR permits appendices in the same PDF after the references.
% \clearpage
\appendix

\section{Implementation and Reproducibility Details}
\label{app:implementation-details}

\subsection{Training Procedure}
\label{app:training-procedure}

Algorithm~\ref{alg:milegpo-training} summarizes how MileGPO is inserted into
the grouped-rollout training pipeline. The rollout and graph-construction
stages are shared with GraphGPO; MileGPO changes the computation between graph
construction and the policy update. In particular, MD, RCS, and PCC are all
computed from the current task-local rollout graph and its observed outcomes.
They do not invoke the environment or an auxiliary model.

The separate normalization in line 10 is important when several outgoing
transitions are tied under final-goal distance: it prevents a numerically small
potential correction from replacing the scale of the graph return. The policy
update instead adds the normalized residual between the shaped and graph-only
advantages, as defined in the main method section.

\subsection{Shared Training and Evaluation Configuration}

All locally evaluated methods use Qwen2.5-1.5B-Instruct as the policy. Each
training iteration samples 16 task groups with eight rollouts per group. The
agent retains the two most recent interaction steps and is limited to 50
environment steps on ALFWorld and 30 on WebShop. We train for 300 optimization
steps with an actor learning rate of $10^{-6}$, a KL coefficient of $0.01$,
and an invalid-action penalty of $0.1$. Validation uses stochastic decoding at
temperature $0.4$.

Shared model, rollout, optimizer, and environment settings are held fixed
across the local comparisons. Method-specific return discounts follow the
corresponding baseline and MileGPO recipes. Advantage normalization is a
recipe-level setting: ALFWorld uses mean--standard-deviation normalization,
whereas WebShop uses mean-only normalization. For each reported method, we run
three test evaluations with seeds 123, 456, and 789, and report their mean and
population standard deviation. WebShop evaluation uses 512 examples per seed
from the official-small split; ALFWorld evaluation uses the full out-of-distribution split of 134 tasks.

Table~\ref{tab:hyperparams-shared} lists settings shared by both environments.
The environment-specific settings are 128/256 training-time validation
examples, 50/30 maximum environment steps, mean--standard-deviation versus
mean-only advantage normalization, 256/64 PPO mini-batch sizes, and 64/4 actor
and rollout/reference micro-batch sizes per GPU for ALFWorld/WebShop,
respectively.

\begin{table*}[!t]
\centering
\small
\setlength{\tabcolsep}{1mm}
\resizebox{\textwidth}{!}{%
\begin{tabular}{@{}llcc@{}}
\toprule
\textbf{Group} & \textbf{Hyperparameter} & \textbf{ALFWorld} & \textbf{WebShop} \\
\midrule
\emph{Model and data}
& Policy model & Qwen2.5-1.5B-Instruct & Qwen2.5-1.5B-Instruct \\
& Training task groups / iteration & 16 & 16 \\
& Rollouts per task group ($K$) & 8 & 8 \\
& Maximum prompt / response tokens & 4096 / 512 & 4096 / 512 \\
& History length & 2 & 2 \\
\addlinespace
\emph{Optimization}
& Actor learning rate & $1\times10^{-6}$ & $1\times10^{-6}$ \\
& KL loss coefficient / type & 0.01 / low-variance KL & 0.01 / low-variance KL \\
& Invalid-action penalty coefficient & 0.1 & 0.1 \\
& Optimization steps / validation interval & 300 / 5 & 300 / 5 \\
\addlinespace
\emph{Rollout and evaluation}
& Tensor model parallel size & 2 & 2 \\
& vLLM GPU memory utilization & 0.40 & 0.40 \\
& Validation decoding & temperature 0.4, sampling enabled & temperature 0.4, sampling enabled \\
& Training seed & 0 & 0 \\
& Test seeds & 123, 456, 789 & 123, 456, 789 \\
& Checkpoint selection & best validation success rate & best validation success rate \\
\bottomrule
\end{tabular}%
}
\caption{Training and evaluation hyperparameters shared by the reported local
Qwen2.5-1.5B-Instruct runs on ALFWorld and WebShop. The two environment columns are
shown separately for direct comparison.}
\label{tab:hyperparams-shared}
\end{table*}

\subsection{Baseline Alignment and Checkpoint Selection}
\label{app:baseline-alignment}

\begin{algorithm}[t]
\caption{MileGPO training procedure}
\label{alg:milegpo-training}
\begin{algorithmic}[1]
\REQUIRE Policy $\pi_{\theta}$, task distribution $p(q)$, group size $K$,
maximum horizon $T$, graph discount $\gamma_G$, shaping coefficient $\lambda$,
residual correction coefficient $\eta$
\FOR{each training iteration}
    \STATE Set the rollout policy to the current policy
    \STATE Sample task groups $q\sim p(q)$ and collect $K$ trajectories per task
    \STATE Canonicalize observations and construct one directed rollout graph $G_q$ per task
    \STATE Compute final-goal distances and GraphGPO returns $r^{\mathrm G}$
    \STATE Mine candidate milestones and traps from visitation and outcome statistics (MD)
    \STATE Weight positive and negative candidates and construct the two target potentials (RCS)
    \STATE Compute local-progress and same-parent branch evidence; retain, amplify, or shrink milestone candidates (PCC)
    \STATE Form the shaped step returns $r^{\mathrm M}$ from $r^{\mathrm G}$ and the potential increments
    \STATE Normalize $r^{\mathrm G}$ and $r^{\mathrm M}$ within each task--source-state group
    \STATE Combine the graph, residual milestone, and episode-level advantages
    \STATE Update $\theta$ with the clipped policy objective and KL regularization
\ENDFOR
\end{algorithmic}
\end{algorithm}

The locally reproduced GiGPO and GraphGPO baselines use the same policy model,
task data, prompt construction, rollout group size, interaction horizon,
optimizer settings, validation interval, and hardware allocation as MileGPO.
The comparison therefore changes the credit estimator while preserving the
agent--environment interface and model-level training workload. We retain
method-specific return definitions and recipe-level normalization conventions
rather than forcing them into a common estimator. Results marked as reproduced
in the main table come from these aligned local runs; results explicitly marked
as reported are transcribed from the corresponding source paper and are not
treated as controlled local comparisons.

All aligned local methods are trained for 300 optimization steps because their
validation curves had not converged under the shorter reported schedule. We
evaluate the checkpoint with the highest training-time validation success rate. ALFWorld ID and
OOD use separate evaluation splits, while WebShop uses the official-small
protocol.

\subsection{Agent Prompt Templates}
\label{app:prompt-templates}

The prompts are shared by MileGPO and the locally reproduced baselines. At each
interaction step, placeholders are filled with the task, the two most recent
observation--action pairs, the current observation, and the admissible actions.
The reasoning and executable action are enclosed by \texttt{<think>} and
\texttt{<action>} tags, respectively.

\begin{figure*}[!t]
\centering
\includegraphics[width=\textwidth]{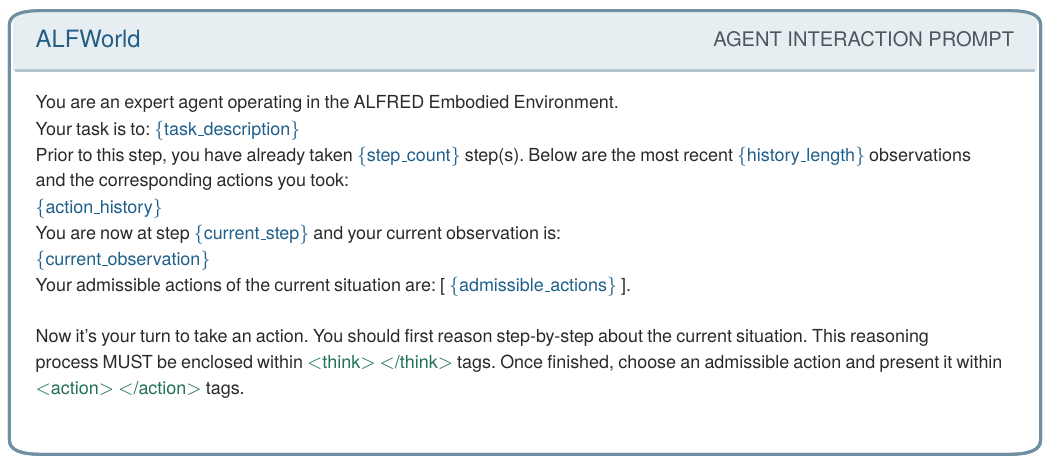}
\caption{ALFWorld prompt template. Runtime placeholders are blue and output
tags are green.}
\label{fig:alfworld-prompt}
\end{figure*}

\begin{figure*}[!t]
\centering
\includegraphics[width=\textwidth]{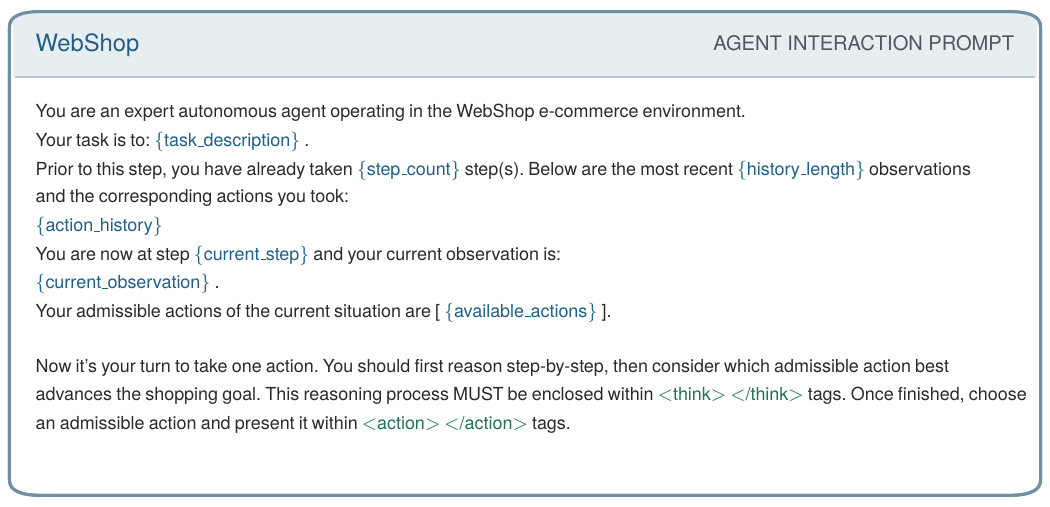}
\caption{WebShop prompt template. Runtime placeholders are blue and output
tags are green.}
\label{fig:webshop-prompt}
\end{figure*}

\subsection{MileGPO Hyperparameters}

For MileGPO, we use $c=10$, $\omega=0.20$, $\gamma_{\Phi}=1$, and
$\lambda=0.25$, with $\gamma_G=0.20$ in both environments. The milestone and
trap weights are $w_{+}=1$ and $w_{-}=0.25$. Candidate mining uses
$w_s=w_m=w_f=1$, $w_c=w_l=0.25$, a minimum node support of one trajectory, and
a trap threshold of $0.1$ with at least two failed-trajectory visits unless a
revisit is observed. The PCC stage sets
$\alpha_d=\alpha_s=\alpha_f=w_{\mathrm{bc}}=w_{\mathrm{pg}}=w_{\mathrm{pcc}}=1$
in both environments. The policy update uses
$w_{\mathrm{step}}=w_{\mathrm{episode}}=1$. ALFWorld uses the selected strict
configuration with $\theta_m=1.1$, $\rho=0$, $\eta=0.20$, and progress-only
candidate eligibility
($\kappa_{\mathrm{bc}}=0$).
Because $m(v)\leq1$, this configuration requires positive progress evidence to
retain a milestone; BCC calibrates its strength but cannot establish eligibility
by itself. The WebShop run reported in the main table uses $\theta_m=0.5$,
$\rho=0.5$, and $\eta=1$, together with
the full progress-or-branch eligibility rule ($\kappa_{\mathrm{bc}}=1$) for its more
ambiguous shopping-state graph. These are environment-level recipe parameters;
the MileGPO mechanism itself is unchanged.

Table~\ref{tab:hyperparams-milegpo} gives the MileGPO configuration used in
the local comparisons. Symbols follow the definitions in the method section.

\begin{table*}[!t]
\centering
\small
\setlength{\tabcolsep}{1mm}
\resizebox{\textwidth}{!}{%
\begin{tabular}{@{}lllcc@{}}
\toprule
\textbf{Stage} & \textbf{Parameter} & \textbf{Role} & \textbf{ALFWorld} & \textbf{WebShop} \\
\midrule
\emph{Graph return}
& $c$ & Return scale & 10 & 10 \\
& $\gamma_G$ & Graph-distance discount & 0.20 & 0.20 \\
& $\gamma_{\Phi}$ & Potential discount & 1.00 & 1.00 \\
& $\lambda$ & Shaping coefficient & 0.25 & 0.25 \\
\addlinespace
\emph{MD and RCS}
& $w_{+}, w_{-}$ & Positive / negative potential weights & 1.00 / 0.25 & 1.00 / 0.25 \\
& $w_s,w_m,w_f$ & Success, coverage, failure weights & 1 / 1 / 1 & 1 / 1 / 1 \\
& $w_c,w_l$ & Centrality and revisit weights & 0.25 / 0.25 & 0.25 / 0.25 \\
& Minimum node support & Distinct trajectory count & 1 & 1 \\
& Minimum trap visits & Failed-trajectory support & 2 & 2 \\
& Minimum trap score & Trap admission threshold & 0.10 & 0.10 \\
& $\omega$ & Milestone/trap propagation discount & 0.20 & 0.20 \\
& \texttt{return\_mode} & Graph and potential composition & \multicolumn{2}{c}{\texttt{graph\_plus\_potential}} \\
\addlinespace
\emph{PCC}
& $\alpha_d,\alpha_s,\alpha_f$ & Distance, success, failure terms & 1 / 1 / 1 & 1 / 1 / 1 \\
& $w_{\mathrm{bc}},w_{\mathrm{pg}}$ & BCC and progress evidence & 1 / 1 & 1 / 1 \\
& $w_{\mathrm{pcc}}$ & PCC reweighting strength & 1 & 1 \\
& $\theta_m$ & Coverage retention threshold & 1.1 & 0.5 \\
& $\kappa_{\mathrm{bc}}$ & Branch-only retention switch & 0 & 1 \\
& $\rho$ & Shrink factor for weak candidates & 0 & 0.5 \\
\addlinespace
\emph{Policy update}
& $w_{\mathrm{step}},w_{\mathrm{episode}}$ & Step / episode advantage weights & 1 / 1 & 1 / 1 \\
& $\eta$ & Residual milestone correction weight & 0.20 & 1.00 \\
\bottomrule
\end{tabular}%
}
\caption{MileGPO hyperparameters. MD denotes milestone discovery, RCS denotes
reliability-calibrated shaping, and PCC denotes progress-contrastive
calibration. The environment-specific PCC controls are fixed during training
and evaluation, as are the policy-update weights.}
\label{tab:hyperparams-milegpo}
\end{table*}

\section{Theoretical Properties of MileGPO}
\label{app:theoretical-properties}

This section establishes three properties of the MileGPO credit estimator on a
fixed on-policy rollout graph. The results concern the credit transformation
computed from one sampled batch; they do not require assumptions about future
rollouts or the environment dynamics. Let the step-level component of the token
advantage in Equation~\ref{eq:joint-advantage} be
\begin{equation}
\begin{aligned}
 B_t&=A_t-w_{\mathrm{episode}}\,\mathrm{Norm}_{q}(Z_i)\\
 &=w_{\mathrm{step}}\left[\widehat A_t^{\mathrm{G}}
 +\eta\widehat A_t^{\mathrm{res}}\right].
\end{aligned}
 \label{eq:proof-step-component}
\end{equation}
We assume $w_{\mathrm{step}}\geq0$ and $\eta\in[0,1]$, as in the reported
configuration. All normalized advantages below are computed within the same
task--source-state group.

\begin{proposition}[Controlled residual interpolation]
\label{prop:residual-interpolation}
Define the GraphGPO and fully shaped step components as
$B_t^{\mathrm{G}}=w_{\mathrm{step}}\widehat A_t^{\mathrm{G}}$ and
$B_t^{\mathrm{M}}=w_{\mathrm{step}}\widehat A_t^{\mathrm{mix}}$,
respectively. Then MileGPO satisfies
\begin{equation}
 B_t=(1-\eta)B_t^{\mathrm{G}}+\eta B_t^{\mathrm{M}},
 \label{eq:proof-convex-interpolation}
\end{equation}
and its deviation from the GraphGPO step component is
\begin{equation}
 |B_t-B_t^{\mathrm{G}}|
 =w_{\mathrm{step}}\eta
 |\widehat A_t^{\mathrm{mix}}-\widehat A_t^{\mathrm{G}}|.
 \label{eq:proof-deviation-bound}
\end{equation}
Consequently, $\eta=0$ recovers the GraphGPO step component, $\eta=1$
recovers the normalized shaped step component, and intermediate values provide
a convex interpolation between them.
\end{proposition}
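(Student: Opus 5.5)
The argument is pure algebra on the definitions in Equations~\ref{eq:separated-step-advantages}, \ref{eq:joint-advantage} and \ref{eq:proof-step-component}. No property of the normalization operator $\mathrm{Norm}_{q,s_t}$ is needed beyond its output being a well-defined real number per transition, so I treat $\widehat A_t^{\mathrm{G}}$ and $\widehat A_t^{\mathrm{mix}}$ as fixed reals computed within the same task--source-state group.

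\textbf{Step 1.} Start from $B_t=w_{\mathrm{step}}[\widehat A_t^{\mathrm{G}}+\eta\widehat A_t^{\mathrm{res}}]$. Substitute the residual definition $\widehat A_t^{\mathrm{res}}=\widehat A_t^{\mathrm{mix}}-\widehat A_t^{\mathrm{G}}$ to get
\begin{equation*}
B_t=w_{\mathrm{step}}\bigl[(1-\eta)\widehat A_t^{\mathrm{G}}+\eta\widehat A_t^{\mathrm{mix}}\bigr].
\end{equation*}
Distributing $w_{\mathrm{step}}$ and recognizing $B_t^{\mathrm{G}}$ and $B_t^{\mathrm{M}}$ yields Equation~\ref{eq:proof-convex-interpolation}.

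\textbf{Step 2.} Subtracting $B_t^{\mathrm{G}}$ gives
\begin{equation*}
B_t-B_t^{\mathrm{G}}=\eta(B_t^{\mathrm{M}}-B_t^{\mathrm{G}})=w_{\mathrm{step}}\eta(\widehat A_t^{\mathrm{mix}}-\widehat A_t^{\mathrm{G}}).
\end{equation*}
Taking absolute values and using $w_{\mathrm{step}}\geq0$ and $\eta\geq0$ lets the factors leave the absolute value unchanged, which gives Equation~\ref{eq:proof-deviation-bound}.

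\textbf{Step 3.} For the consequences, substitute $\eta=0$ and $\eta=1$ into Equation~\ref{eq:proof-convex-interpolation}. For $\eta\in(0,1)$, the coefficients $1-\eta$ and $\eta$ are nonnegative and sum to one, so $B_t$ is a convex combination of the two components. In particular, $B_t$ lies in the closed interval between $B_t^{\mathrm{G}}$ and $B_t^{\mathrm{M}}$.

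There is no real obstacle. The only point needing care is that the residual is formed from two separately normalized quantities within the same group, so the algebra is pointwise per transition and does not require linearity of $\mathrm{Norm}_{q,s_t}$. The sign assumptions $w_{\mathrm{step}}\geq0$ and $\eta\in[0,1]$ are needed only for the absolute-value step and the convexity claim.
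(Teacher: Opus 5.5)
Your proposal is correct and follows essentially the same route as the paper: substitute $\widehat A_t^{\mathrm{res}}=\widehat A_t^{\mathrm{mix}}-\widehat A_t^{\mathrm{G}}$ into the step component to get the convex combination, then subtract $B_t^{\mathrm{G}}$ and take absolute values, and finally read off the $\eta=0$, $\eta=1$ and convexity claims from $\eta\in[0,1]$. Your version is slightly more explicit than the paper's in two places: you state that the absolute-value step uses $w_{\mathrm{step}}\geq0$ and $\eta\geq0$, and you note that linearity of $\mathrm{Norm}_{q,s_t}$ is not needed.
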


\begin{proof}
By definition,
$\widehat A_t^{\mathrm{res}}=\widehat A_t^{\mathrm{mix}}
-\widehat A_t^{\mathrm{G}}$. Substituting this identity into
Equation~\ref{eq:proof-step-component} gives
\begin{equation}
\begin{array}{rcl}
B_t&=&w_{\mathrm{step}}\bigl[\widehat A_t^{\mathrm{G}}
+\eta(\widehat A_t^{\mathrm{mix}}-\widehat A_t^{\mathrm{G}})\bigr]\\
&=&(1-\eta)B_t^{\mathrm{G}}+\eta B_t^{\mathrm{M}}.
\end{array}
\end{equation}
Subtracting $B_t^{\mathrm{G}}$ and taking absolute values yields
Equation~\ref{eq:proof-deviation-bound}. Since $\eta\in[0,1]$, the two
coefficients in Equation~\ref{eq:proof-convex-interpolation} are nonnegative
and sum to one.
\end{proof}

\begin{proposition}[Resolution of graph-distance ties]
\label{prop:distance-tie-resolution}
Consider two observed transitions $e_1=(s,a_1,s'_1)$ and
$e_2=(s,a_2,s'_2)$ in the same task--source-state normalization group. If
GraphGPO assigns them equal step advantages, then
\begin{equation}
 B(e_1)-B(e_2)=w_{\mathrm{step}}\eta\left[
 \widehat A^{\mathrm{mix}}(e_1)-
 \widehat A^{\mathrm{mix}}(e_2)\right].
 \label{eq:proof-tie-difference}
\end{equation}
Thus, for $w_{\mathrm{step}}\eta>0$, any strict ordering produced by the
normalized shaped advantage becomes the strict ordering of the MileGPO
step-level credit. In particular, equal destination-to-goal distances imply
equal GraphGPO returns and hence satisfy the premise.
\end{proposition}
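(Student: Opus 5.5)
The plan is to start from the expansion of the step component already obtained in the proof of Proposition~\ref{prop:residual-interpolation}, namely
\[
B_t=(1-\eta)\,w_{\mathrm{step}}\widehat A_t^{\mathrm{G}}+\eta\,w_{\mathrm{step}}\widehat A_t^{\mathrm{mix}},
\]
and evaluate it at the two transitions $e_1$ and $e_2$. Both lie in the same task--source-state normalization group, so $\widehat A^{\mathrm{G}}$ and $\widehat A^{\mathrm{mix}}$ are each computed with one shared normalizer $\mathrm{Norm}_{q,s}$. Hence the two values can be compared directly.

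Subtracting the two expansions gives
\[
B(e_1)-B(e_2)=(1-\eta)w_{\mathrm{step}}\bigl[\widehat A^{\mathrm{G}}(e_1)-\widehat A^{\mathrm{G}}(e_2)\bigr]+\eta w_{\mathrm{step}}\bigl[\widehat A^{\mathrm{mix}}(e_1)-\widehat A^{\mathrm{mix}}(e_2)\bigr].
\]
The premise that GraphGPO assigns equal step advantages removes the first bracket, which yields Equation~\ref{eq:proof-tie-difference}. For the ordering claim, note that $w_{\mathrm{step}}\eta>0$ is a positive scalar multiplying the difference. Therefore $\widehat A^{\mathrm{mix}}(e_1)>\widehat A^{\mathrm{mix}}(e_2)$ holds if and only if $B(e_1)>B(e_2)$, and the same equivalence holds for the reverse strict inequality.

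The only step needing care is the final remark, that equal destination distances satisfy the premise. By Equation~\ref{eq:graphgpo-return}, $d(s'_1,g)=d(s'_2,g)$ gives $r^{\mathrm{G}}(e_1)=c\gamma_G^{d(s'_1,g)}=r^{\mathrm{G}}(e_2)$. This also covers the unreachable case, where both returns are zero by the paper's convention. I then need $\mathrm{Norm}_{q,s}$ to map equal raw returns within one group to equal normalized values. This holds for both normalizations used in the paper. Mean-only normalization computes $r-\bar r$. Mean--standard-deviation normalization computes $(r-\bar r)/\sigma$, possibly with an $\epsilon$ added to $\sigma$, or with a zero fallback when $\sigma=0$. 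In each case the normalization is a single map $r\mapsto f(r)$ applied to every member of the group, so equal inputs give equal outputs.

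I expect this last point to be the main obstacle, because the paper only describes $\mathrm{Norm}_{q,s_t}$ informally. I would make the group-wise pointwise form explicit as a stated assumption covering both recipes. With that assumption, the premise follows, and the rest is algebra.
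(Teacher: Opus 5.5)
Your proposal is correct and follows essentially the same route as the paper. The paper also applies the convex-interpolation identity from Proposition~\ref{prop:residual-interpolation} to both transitions, cancels the GraphGPO difference using the premise, and notes that equal destination distances give equal raw returns $c\gamma_G^{d(s',g)}$, which the shared group normalization keeps equal. Your explicit positive-scalar argument for the strict ordering and your pointwise reading of $\mathrm{Norm}_{q,s}$ spell out details the paper leaves implicit.
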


\begin{proof}
Applying Equation~\ref{eq:proof-convex-interpolation} to the two transitions
and taking their difference gives the expression below, where
$\Delta_{\mathrm{G}}=\widehat A^{\mathrm{G}}(e_1)
-\widehat A^{\mathrm{G}}(e_2)$ and
$\Delta_{\mathrm{mix}}=\widehat A^{\mathrm{mix}}(e_1)
-\widehat A^{\mathrm{mix}}(e_2)$:
\begin{equation}
B(e_1)-B(e_2)=(1-\eta)w_{\mathrm{step}}\Delta_{\mathrm{G}}
+\eta w_{\mathrm{step}}\Delta_{\mathrm{mix}}.
\end{equation}
The first difference is zero by the premise, which proves
Equation~\ref{eq:proof-tie-difference}. Moreover, the GraphGPO return is
$c\gamma_G^{d(s',g)}$; therefore
$d(s'_1,g)=d(s'_2,g)$ gives equal raw graph returns. Applying the same
normalization to equal values in the same group preserves equality.
\end{proof}

\begin{proposition}[Bounded raw shaping correction]
\label{prop:bounded-shaping}
Assume the task-wise normalized candidate scores lie in $[0,1]$,
$\omega,\gamma_{\Phi}\in(0,1]$, and
$c,\lambda,w_{+},w_{-}\geq0$. Then the one-sided potential increments obey
\begin{equation}
 0\leq\delta_t^{+}\leq w_{+},\qquad
 0\leq\delta_t^{-}\leq w_{-},
 \label{eq:proof-increment-bounds}
\end{equation}
and the raw MileGPO return differs from the GraphGPO return by at most
\begin{equation}
 -c\lambda w_{-}\leq r_t^{\mathrm{M}}-r_t^{\mathrm{G}}
 \leq c\lambda w_{+}.
 \label{eq:proof-return-bound}
\end{equation}
For the reported values $c=10$, $\lambda=0.25$, $w_{+}=1$, and
$w_{-}=0.25$, the correction lies in $[-0.625,2.5]$.
\end{proposition}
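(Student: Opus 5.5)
The plan is to bound the two potentials first, then the one-sided increments, and finally substitute into the definition of $r_t^{\mathrm{M}}$ in Equation~\ref{eq:milegpo-return}.

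\textbf{Step 1: potentials lie in $[0,1]$.} By Equation~\ref{eq:positive-potential}, $\Phi^{+}(s)$ is a maximum over $v\in\mathcal{S}^{+}$ of $\overline S_{\mathrm{pcc}}^{+}(v)\,\omega^{d(s,v)}$. By assumption, each normalized score lies in $[0,1]$. Since $\omega\in(0,1]$ and $d(s,v)\ge0$, each decay factor $\omega^{d(s,v)}$ lies in $[0,1]$, with the convention that it equals $0$ when $d=\infty$. Hence every term lies in $[0,1]$, and so does the maximum. The same argument applied to $\Phi^{-}=\Phi_{\mathrm{R}}^{-}$ of Equation~\ref{eq:reliability-calibrated-potential} gives $\Phi^{-}(s)\in[0,1]$. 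When $\mathcal{S}^{\pm}$ is empty or every target is unreachable, the maximum is taken to be $0$, consistent with the paper's ``empty maxima yield zero'' convention.

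\textbf{Step 2: bounding the increments.} Nonnegativity of $\delta_t^{\pm}$ is immediate from the outer $\max(\cdot,0)$ in Equations~\ref{eq:positive-shaping}--\ref{eq:negative-shaping}. For the upper bound, use $w_{+}\ge0$, $\gamma_\Phi\le1$ and $\Phi^{+}(s_{t+1})\le1$ to get $\gamma_\Phi w_{+}\Phi^{+}(s_{t+1})\le w_{+}$. Then use $-w_{+}\Phi^{+}(s_t)\le0$. Together these give
\[
\gamma_\Phi w_{+}\Phi^{+}(s_{t+1})-w_{+}\Phi^{+}(s_t)\le w_{+}.
\]
Since $w_{+}\ge0$, taking the maximum with $0$ preserves the bound. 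The identical argument gives $\delta_t^{-}\le w_{-}$.

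\textbf{Step 3: the return bound.} The first term of $r_t^{\mathrm{M}}$ is exactly $r_t^{\mathrm{G}}=c\gamma_G^{d(s_{t+1},g)}$, so
\[
r_t^{\mathrm{M}}-r_t^{\mathrm{G}}=c\lambda(\delta_t^{+}-\delta_t^{-}).
\]
Step 2 gives $-w_{-}\le\delta_t^{+}-\delta_t^{-}\le w_{+}$. Multiplying by the nonnegative factor $c\lambda$ yields Equation~\ref{eq:proof-return-bound}. Plugging in $c=10$, $\lambda=0.25$, $w_{+}=1$ and $w_{-}=0.25$ gives the interval $[-0.625,2.5]$.

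\textbf{Main obstacle.} The only delicate point is Step 1. There I must confirm that the normalized scores $\overline S_{\mathrm{pcc}}^{+}$ and $\overline S^{-}$ are indeed the quantities assumed to lie in $[0,1]$. This holds because max-normalization of nonnegative scores gives values in $[0,1]$, and it is stated as a hypothesis anyway. I also need to handle the degenerate cases of infinite distances and empty candidate sets explicitly, so that the potentials are well defined. Everything else is monotone arithmetic.
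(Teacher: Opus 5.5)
Your proposal is correct and follows the paper's proof essentially step for step. You bound both potentials in $[0,1]$ as maxima of normalized-score-times-decay terms, deduce $0\le\delta_t^{\pm}\le w_{\pm}$ from the one-sided clipping (the paper phrases this as $\delta_t^{+}\le\gamma_\Phi w_{+}\Phi^{+}(s_{t+1})\le w_{+}$), and substitute into $r_t^{\mathrm{M}}-r_t^{\mathrm{G}}=c\lambda(\delta_t^{+}-\delta_t^{-})$. Your explicit treatment of unreachable targets and empty candidate sets adds a bit of care that the paper leaves implicit.
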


\begin{proof}
Each positive or negative potential is the maximum of terms of the form
$\overline S(v)\omega^{d(s,v)}$. The normalized score, distance decay, and their product all lie in $[0,1]$, so
$0\leq\Phi^{+}(s),\Phi^{-}(s)\leq1$. From
Equations~\ref{eq:positive-shaping} and~\ref{eq:negative-shaping},
\begin{equation}
\begin{array}{rcl}
0\leq\delta_t^{+}
&\leq&\gamma_{\Phi}w_{+}\Phi^{+}(s_{t+1})\leq w_{+},\\
0\leq\delta_t^{-}
&\leq&\gamma_{\Phi}w_{-}\Phi^{-}(s_{t+1})\leq w_{-}.
\end{array}
\end{equation}
Finally, Equation~\ref{eq:milegpo-return} gives
$r_t^{\mathrm{M}}-r_t^{\mathrm{G}}
=c\lambda(\delta_t^{+}-\delta_t^{-})$. Substituting the increment bounds
proves Equation~\ref{eq:proof-return-bound}; substituting the reported
hyperparameters gives the stated numerical interval.
\end{proof}

\begin{figure*}[t]
\centering
\includegraphics[width=\textwidth]{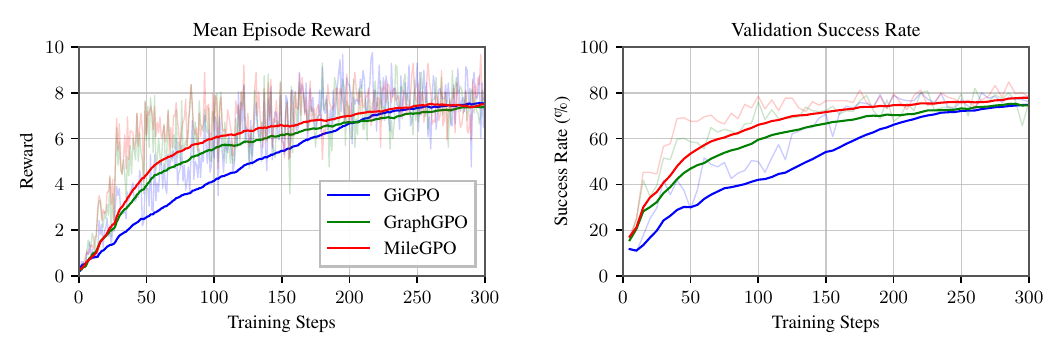}
\caption{WebShop training dynamics of the aligned GiGPO, GraphGPO, and
MileGPO runs. From left to right, the panels show mean episode reward and
validation success rate. Faint lines show per-checkpoint measurements, and
solid lines show the smoothed training trends.}
\label{fig:webshop-training-curves}
\end{figure*}

\subsection{Full WebShop Training Dynamics}
\label{app:webshop-training-dynamics}

Together, these propositions show that MileGPO adds a tunable, tie-resolving,
and bounded correction to graph-distance credit on each sampled rollout graph.

Figure~\ref{fig:webshop-training-curves} reports the complete optimization
trajectories of the aligned GiGPO, GraphGPO, and MileGPO WebShop runs. MileGPO
improves more rapidly during the early and middle stages in both the training
and validation views, while the curves become closer near the end of training.
These trajectories characterize optimization dynamics rather than final
performance; the endpoint comparisons in the main paper follow the independent
evaluation protocol described in Section~\ref{sec:experiments}.

\FloatBarrier
\section{Additional Experimental Analyses}
\label{app:additional-analyses}

\subsection{Checkpoint-Wise Replay of Distance Ties}
\label{app:distance-tie-replay}

Figure~\ref{fig:rollout-md-rcs-replay} expands the pooled replay result in
Figure~\ref{fig:pcc-replay-composite} by resolving it across checkpoints. We
identify sibling transitions that share a parent state, have opposite episode
outcomes, and are tied under GraphGPO's final-state-distance advantage. On the
same GraphGPO trajectories, RCS ranks the successful transition above
the failed one for 26.5\% of the 569 eligible pairs, compared with 17.0\% for
uniform MD. Figure~\ref{fig:rollout-md-rcs-replay} shows where these
corrections occur across checkpoints and also reveals that the number of
eligible ties decreases later in training. Because
the comparison replays alternative shaping on fixed trajectories, it should be
interpreted as descriptive mechanism evidence rather than a causal training
ablation.

We group transitions by task and canonicalized parent state, label a transition
as successful when its trajectory-level episode
return is positive, and declare a GraphGPO advantage tie at an absolute
difference of at most $10^{-6}$. Both replay variants use the same RCS-mined
candidate set. Uniform MD assigns unit weight to every mined milestone and
trap, whereas RCS retains the mined reliability weights; PCC is disabled in
both variants. The replay uses $\gamma_G=0.20$, $\omega=0.20$, and
$\lambda=0.25$, matching the reported WebShop recipe.

\begin{figure}[t]
\centering
\includegraphics[width=\columnwidth]{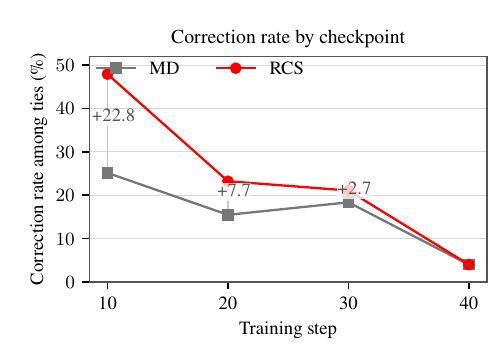}
\caption{Offline replay on the same GraphGPO WebShop rollout traces We compare success/failure
transitions sharing a parent state and tied under GraphGPO's final-state-distance
advantage. We report the correction rate,
$N_{\rm corrected}/N_{\rm tie}$, at each checkpoint with eligible pairs; a
correction means that replayed shaping orders the successful transition above
the failed one. Step 50 is omitted because it contains no eligible ties. This is a
descriptive offline replay, not a causal intervention or a retrained ablation.}
\label{fig:rollout-md-rcs-replay}
\end{figure}

\clearpage
\begin{table*}[t]
\centering
\small
\setlength{\tabcolsep}{1mm}
\resizebox{\textwidth}{!}{%
\begin{tabular}{@{}>{\raggedright\arraybackslash}p{0.10\textwidth}>{\centering\arraybackslash}p{0.02\textwidth}>{\raggedright\arraybackslash}p{0.41\textwidth}>{\raggedright\arraybackslash}p{0.37\textwidth}>{\centering\arraybackslash}p{0.025\textwidth}@{}}
\toprule
\textbf{Branch} & \textbf{$t$} & \textbf{Agent action} & \textbf{Environment feedback / observation} & \textbf{$r_t$} \\
\midrule
Successful & 0 & \texttt{search[machine wash men's dress shirts with polyester heathers, heathers cotton, cotton heather, needle sleeve, classic fit with color: kelly green, and fit type: youth, and size: x-small, and price lower than 40.00 dollars]}
& Results page 1 (50 products); \texttt{B09P39QN2W}, priced at $\$16.99$, is listed. & 0 \\
& 1 & \texttt{click[b09p39qn2w]} & Product page for \texttt{B09P39QN2W}; fit-type, color, and size options are shown. & 0 \\
& 2 & \texttt{click[kelly green]} & Confirms \texttt{kelly green}; product page remains open. & 0 \\
& 3 & \texttt{click[x-small]} & Confirms \texttt{x-small}; selected options are kelly green and x-small. & 0 \\
& 4 & \texttt{click[classic fit]} & Product page unchanged; the requested attribute is not an offered button. & 0 \\
& 5 & \texttt{click[needle sleeve]} & Product page unchanged; the requested attribute is not an offered button. & 0 \\
& 6 & \texttt{click[classic fit]} & Product page unchanged. & 0 \\
& 7 & \texttt{click[youth]} & Confirms \texttt{youth}; selected options are youth, kelly green, and x-small. & 0 \\
& 8 & \texttt{click[x-small]} & Product page unchanged; x-small remains selected. & 0 \\
& 9 & \texttt{click[polyester]; click[heathers]; click[cotton]; click[needle sleeve]; click[classic fit]; click[men]; click[price lower than 40.00 dollars]} & Product page unchanged; the three valid selections remain active. & 0 \\
& 10 & \texttt{click[x-small]} & Product page unchanged. & 0 \\
& 11 & \texttt{click[needle sleeve]} & Product page unchanged. & 0 \\
& 12 & \texttt{click[needle sleeve]} & Product page unchanged. & 0 \\
& 13 & \texttt{click[classic fit]; click[needle sleeve]} & Product page unchanged. & 0 \\
& 14 & \texttt{click[kelly green]; click[40.00]} & Product page unchanged. & 0 \\
& 15 & \texttt{click[polyester heathers]; click[heathers cotton]; click[cotton heathers]; click[lower than 40.00]} & Product page unchanged. & 0 \\
& 16 & \texttt{click[polyester heathers]} & Product page unchanged. & 0 \\
& 17 & \texttt{click[x-small]} & Product page unchanged. & 0 \\
& 18 & \texttt{click[classic fit]; click[price lower than 40.00]} & Product page unchanged. & 0 \\
& 19 & \texttt{click[x-small]} & Product page unchanged. & 0 \\
& 20 & \texttt{click[classic fit]} & Product page unchanged. & 0 \\
& 21 & \texttt{click[x-small]} & Product page unchanged. & 0 \\
& 22 & \texttt{click[kelly green]} & Product page unchanged. & 0 \\
& 23 & \texttt{click[x-small]} & Product page unchanged. & 0 \\
& 24 & \texttt{click[classic fit]} & Product page unchanged. & 0 \\
& 25 & \texttt{click[kelly green]} & Product page unchanged. & 0 \\
& 26 & \texttt{click[youth]} & Product page unchanged. & 0 \\
& 27 & \texttt{click[buy now]} & \texttt{success}; purchase accepted. & 10 \\
\midrule
Failed & 0 & \texttt{search[kelly green, machine wash, polyester heathers, heathers cotton, cotton heathers, needle sleeve, classic fit, youth, x-small, price: \$40.00--\$50.00]} & Results page 1 (50 products); \texttt{B09P39QN2W}, priced at $\$16.99$, is listed. & 0 \\
& 1 & \texttt{click[b09p39qn2w]} & Product page for \texttt{B09P39QN2W}; fit-type, color, and size options are shown. & 0 \\
& 2 & \texttt{click[kelly green]} & Confirms \texttt{kelly green}; product page remains open. & 0 \\
& 3 & \texttt{click[x-small]} & Confirms \texttt{x-small}; selected options are kelly green and x-small. & 0 \\
& 4 & \texttt{click[buy now]} & Purchase response reports task score $0.9$ but no binary success; episode terminates as failure. & 0 \\
\bottomrule
\end{tabular}%
}
\caption{Complete paired WebShop rollout for the representative
price-constraint correction at training step 30. Both branches share the task
and initial observation; all 28 successful and five failed transitions are
shown.}
\label{tab:webshop-qualitative-traces}
\end{table*}
\clearpage

\subsection{Representative Credit-Assignment Corrections}
\label{app:webshop-qualitative-traces}

Table~\ref{tab:webshop-qualitative-traces} makes the aggregate tie correction
in Figure~\ref{fig:rollout-md-rcs-replay} concrete through a complete paired
trace. The two rollouts start from the \emph{same task and exactly the same
initial observation}, but issue different search actions. The successful branch
preserves the requested price ceiling, whereas the failed branch searches in an
incompatible price interval. GraphGPO assigns the two first transitions the
same advantage ($\Delta_{\mathrm G}=0.000$), while RCS favors the successful
branch by $\Delta_{\mathrm{RCS}}=+1.533$. Every environment transition in both
rollouts is shown. The feedback column reports state changes and denotes
repeated product-page observations as ``product page unchanged.''
This case illustrates the complementary roles of MD and RCS. MD exposes
intermediate states as candidate credit anchors, allowing the successful and
failed branches to receive distinct intermediate shaping before terminal
success is observed. RCS then calibrates these anchors by their outcome
support, assigning higher credit to the action that preserves the task
constraint. Across all eligible ties, this calibration raises the correction
rate from 17.0\% with uniform MD to 26.5\%.

Two additional same-state corrections exhibit the same pattern. On a product
page, choosing the listed option \texttt{click[xnj-tshirt342-black]} instead of
emitting text outside the action grammar changes the RCS margin by $+0.803$.
On a compatible loafer page, selecting the requested size with
\texttt{click[12]} instead of retreating with \texttt{click[< prev]} changes it
by $+0.562$.

PCC is disabled in this replay and is therefore not illustrated by these paired
traces. Its contribution is evaluated separately by the cumulative ablation in
Table~\ref{tab:cumulative-ablation} and the candidate-evidence diagnostics in
Figure~\ref{fig:pcc-replay-composite}. Adding PCC to RCS increases WebShop
success from 77.2\% to 78.6\% and task score from 87.9\% to 90.3\%, while
the diagnostics show how local progress and same-parent branch evidence are
used to retain, strengthen, or suppress candidate milestones.

\end{document}